\theoremstyle{plain}
\newtheorem{theorem}{Theorem}
\newtheorem{myquestion}{Question}
\theoremstyle{definition}
\newtheorem{definition}{Definition}
\theoremstyle{remark}
\newcommand\numberthis{\addtocounter{equation}{1}\tag{\theequation}}
\newcommand{\bsTwo}{\boldsymbol{S_2}}
\newcommand{\bbP}{\mathbb{P}}
\newcommand\kear{\textsc{kear}}
\newcommand\raft{\textsc{raft}}
\newcommand\kaft{\textsc{kaft}}
\newcommand\kaftc{\textsc{kaft-c}}
\newcommand\kaftcAbs{\textsc{$\vert$kaft-c$\vert$}}
\newcommand\raftc{\textsc{raft-c}}
\newcommand\raftcAbs{\textsc{$\vert$raft-c$\vert$}}
\newcommand\roar{\textsc{roar}}
\newcommand\roarAbs{\textsc{$\vert$roar$\vert$}}
\DeclareRobustCommand{\flatdownarrow}{
    \begin{tikzpicture}[baseline={(0,-0.25)}]
        \draw[line width=0.2mm, ->] (0,0) arc[start angle=90, end angle=0, radius=0.05] -- (0.05, -0.05) -- (0.05,-0.3);
    \end{tikzpicture}
}
\DeclareRobustCommand{\flatuparrow}{
    \begin{tikzpicture}[baseline={(0,0.05)}]
        \draw[line width=0.2mm, ->] (0,0) arc[start angle=-90, end angle=0, radius=0.05] -- (0.05, 0.05) -- (0.05,0.3);
    \end{tikzpicture}
}
\title{Rethinking Explanation Evaluation under the Retraining Scheme}
\author{
Yi~Cai \\
Dept. of Math. and Comp. Science \\
Freie Universität Berlin\\
Berlin, Germany \\
\texttt{yi.cai@fu-berlin.de} \\
\And
Thibaud~Ardoin \\
Dept. of Math. and Comp. Science \\
Freie Universität Berlin\\
Berlin, Germany \\
\texttt{thibaud.ardoin@fu-berlin.de} \\
\And
Mayank~Gulati \\
Dept. of Math. and Comp. Science \\
Freie Universität Berlin \\
Berlin, Germany \\
\texttt{mayank.gulati@fu-berlin.de} \\
\And
Gerhard~Wunder\\
Dept. of Math. and Comp. Science \\
Freie Universität Berlin\\
Berlin, Germany \\
\texttt{g.wunder@fu-berlin.de}
}
\begin{document}

\maketitle

\begin{abstract}
Feature attribution has gained prominence as a tool for explaining model decisions, yet evaluating explanation quality remains challenging due to the absence of ground-truth explanations.
To circumvent this, explanation-guided input manipulation has emerged as an indirect evaluation strategy, measuring explanation effectiveness through the impact of input modifications on model outcomes during inference.
Despite the widespread use, a major concern with inference-based schemes is the distribution shift caused by such manipulations, which undermines the reliability of their assessments.
The retraining-based scheme \roar{} overcomes this issue by adapting the model to the altered data distribution.
However, its evaluation results often contradict the theoretical foundations of widely accepted explainers.
This work investigates this misalignment between empirical observations and theoretical expectations. 
In particular, we identify the \textit{Sign} issue as a key factor responsible for residual information that ultimately distorts retraining-based evaluation. 
Based on the analysis, we show that a straightforward reframing of the evaluation process can effectively resolve the identified issue.
Building on the existing framework, we further propose novel variants that jointly structure a comprehensive perspective on explanation evaluation. 
These variants largely improve evaluation efficiency over the standard retraining protocol, thereby enhancing practical applicability for explainer selection and benchmarking.
Following our proposed schemes, empirical results across various data scales provide deeper insights into the performance of carefully selected explainers, revealing open challenges and future directions in explainability research.
\end{abstract}

\section{Introduction}\label{sec:intro}
Explainable AI (XAI) has gained significant attention over the past decade, motivated by the blooming real-world applications of data-driven models.
These models learn decision rules implicitly from data; however, the self-learned decision processes --- shaped by complex and non-linear model architectures --- are often opaque, raising concerns about transparency and trustworthiness.
As a foundational step towards model explainability, feature attribution aims at quantifying the contribution of input features to inquired predictions, thereby identifying the most influential evidence during inference.
Encouraged by the growing demand for transparency, numerous attribution methods have been proposed, each claiming improved performance and distinct advantages.
Despite this progress, an objective assessment of explanation quality has long been a challenge in XAI.
Unlike standard machine learning tasks, feature attribution naturally lacks ground truth that exactly describes model behavior~\citep{hedstrommeta}. 
Moreover, the premise that humans have limited insight to model internals prohibits the application of human judgment in explanation evaluation.
This circumstance creates a \textbf{paradox of explanation evaluation}: assessing explanation quality requires precise knowledge of model behavior, yet explainability is pursued because that behavior is unknown.
In practice, the optimal explainer choice and its hyperparameter configuration vary across use cases, affected by factors such as input modality, model architecture, and feature space dimensionality.
The challenge in explanation evaluation enhances the difficulty in selecting and configuring explainers for a given scenario.

Compromising with the explanation evaluation paradox, input manipulation has become a commonly adopted strategy for indirectly evaluating explanations by quantifying their effectiveness in affecting model outcomes.
Intuitively, an effective explainer should identify influential features whose absence triggers significant prediction changes.
We refer to such evaluation strategies as \textbf{inference schemes}, since they investigate explanation impacts at inference time.
While inference schemes offer an efficient means of explanation evaluation, concerns have been raised regarding the reliability of resulting assessments.
\citet{hooker2019benchmark} point out that the performance changes may stem from distribution shifts caused by input manipulations, rather than the absence of influential features.
\citet{wang2024benchmarking} further emphasize this issue by showing that the evaluation outcome of inference schemes can be treated as an optimization objective, delivering manipulation sequences that maximize prediction change but deviate from the intent of explainability --- to reflect model behaviors rather than to optimize the manipulation process.

To address distribution shift during input manipulation, \citet{hooker2019benchmark} proposed \roar{}, arguing that retraining the tested model is mandatory after feature occlusion.
Specifically, the \textbf{retraining scheme} creates manipulated copies of the dataset by removing the most influential features, then retrains the model on these modified copies.
The performance degradation of the retrained model is interpreted as an indicator of explanation effectiveness.
Surprisingly, \roar{} reports that many popular explanation methods perform no better than random~\citep{hooker2019benchmark}.
This striking conflict between empirical observations and the well-founded theoretical backgrounds of feature attribution methods~\citep{sundararajan2017axiomatic, sundararajan2020many, erion2021improving} motivates the investigation presented in this work.
The contributions of this paper are: 
1. We investigate the catastrophic explainer performance reported by \roar{} and identify an implicit, unrealistic assumption as the primary source of evaluation distortion;
2. We further strengthen this claim by highlighting the \textit{Sign} issue as a concrete violation of \roar{}'s assumption;
3. We address the distortion by reframing the \textit{retraining scheme} and propose additional evaluation variants for computational efficiency, promoting their practical applicability;
4. Our empirical results reveal limitations in certain attribution methods, exposing open challenges and future research directions in explainability research.

\section{Related Work}\label{sec:rw}

The use of inference schemes for explanation evaluation spans from the early stages of XAI studies~\citep{samek2016evaluating, montavon2018methods} to recent developments~\citep{cai2024gradient, muzellec2024saliency}.
The core idea is to assess explanation quality indirectly by examining whether the attribution scores assigned to input features correspond to their actual impact on model predictions~\citep{zeiler2014visualizing, bach2015pixel}.
For explainers that correctly capture relevant evidence, removing the highly attributed features should lead to substantial drops in model confidence.
\citet{samek2016evaluating} formalized this idea by recursively removing features in descending order of attributions and quantifying explanation quality with the area over the perturbation curve (AOPC).
Subsequent studies extended this approach by altering the feature removal order~\citep{petsiuk2018rise, brocki2023feature} and normalizing AOPC scores to mitigate the sensitivity of the measure to the original prediction confidence~\citep{cai2025gefa}.
Similarly, \citet{bhatt2020evaluating} and \citet{yeh2019fidelity} evaluated explainer performance by measuring the correlation between attribution scores and prediction changes under random feature removal.

Although inference schemes are widely adopted for their efficiency, there has been criticism about the validity of their evaluation results due to the out-of-distribution (OOD) concern~\citep{hooker2019benchmark, jain2022missingness}.
The difficulty in disentangling the effects of feature removal (intentional) from the consequences of distribution shift (unintentional) undermines the trustworthiness of inference-based evaluation results.
To address this, \citet{hooker2019benchmark} proposed incorporating model retraining after input manipulation as part of the evaluation process, introducing \underline{r}em\underline{o}ve \underline{a}nd \underline{r}etrain (\roar{}).
\roar{} retrains the target model on manipulated inputs and interprets the retraining accuracy as a proxy for explanation quality.
While retraining mitigates the OOD issue, the assessments by \roar{} contradict theoretical expectations and may appear misleading~\citep{lundberg2020local}. 

\citet{rong2022consistent} and \citet{parkgeometric} sought to improve the retraining protocol by refining the choice of replacement values during feature removal. 
However, since most feature attribution methods are baseline-dependent~\citep{lundberg2017unified,sundararajan2017axiomatic}, decoupling the removal process from the baseline value --- which provides the necessary context to interpret the derived explanations --- can introduce evaluation biases.
In particular, the use of generative models for feature removal \citep{parkgeometric} loses precise control over the manipulation process, challenging the transparency of the evaluation framework.
Beyond concerns of bias, the intensive computational overhead associated with exhaustive retraining remains unaddressed, limiting the broader impact of retraining-based evaluation schemes.


\section{Distortion in Retraining Scheme}\label{sec:motivation}
To investigate the source of distortion in retraining-based evaluation, we focus on \roar{} --- the standard protocol in this category. 
Under the \roar{} scheme, an effective explainer is expected to induce greater performance degradation after retraining on manipulated data with the highest attributed features removed~\citep{hooker2019benchmark}.
This expectation implicitly poses the following evaluation question:
\begin{myquestion}[\roar{}]\label{rq_roar}
  Does the tested explainer identify all task-relevant features?
\end{myquestion}

\begin{figure*}
    \centering
    \includegraphics[width=1.\textwidth]{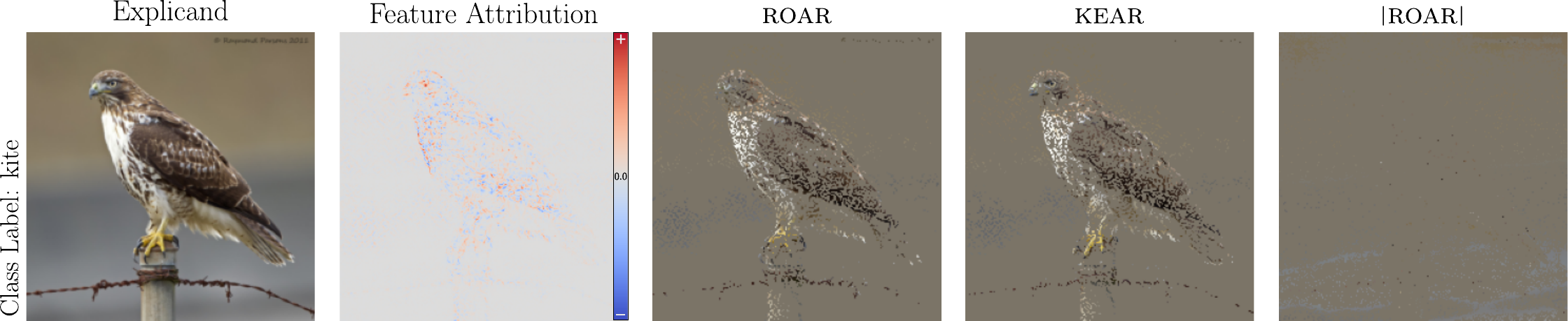}
    \caption{Example of the \textit{Sign} issue. 
    The first two columns show the original input and the corresponding feature attributions derived by IG. The subsequent columns present the manipulated inputs after removing $90\%$ of features following three evaluation schemes: \roar{}, \kear{}, and \roarAbs{}.
    For \roar{}, the target-overlapping negative features leak information about the target object, leading to evaluation distortion.
    Despite the visual similarity of the manipulated inputs, retraining following \kear{} achieves over $20\%$ higher accuracy, implying that the explainer correctly identifies contributing features with an appropriate class association.
    As a control group, removing all relevant features (by \roarAbs{}) achieves the expectation described in Equation~\eqref{eq:roar_exp} and yields the lowest accuracy among the three, which is $6\%$ lower than random feature removal.
    }\label{fig:sign_example}
\end{figure*}

\subsection{Distortion by Residual Information}
Let\footnote{Throughout the paper, vectors and sets are typeset in boldface, whereas scalars are presented in plain font.} $\boldsymbol{N}=\{x_1,\ldots,x_n\}$ be the full feature set and $\boldsymbol{S}\subset \boldsymbol{N}$ represent the subset of the most important features for the model function $\boldsymbol{f}(\boldsymbol{N})$.
Borrowing the notion of mutual information from information theory, let $I(\boldsymbol{N};y)$ represent the utility of the input features for predicting the target label $y$. 
The core expectation of Question~\ref{rq_roar} can be formalized as:
\begin{equation}\label{eq:roar_exp}
    I(\boldsymbol{N};y)\gg 0 \overset{\textrm{manipulate}}{\implies} \tilde{I}(\boldsymbol{N}\backslash \boldsymbol{S};y) \approx 0
\end{equation}
where $\tilde{I}$ denotes the remaining utility after distribution shift caused by explanation-guided input manipulation.
While this expectation appears reasonable --- the most important features $\boldsymbol{S}$ should be identified and removed --- a subtle gap between model knowledge and data information can lead to deviations from the expected manipulation results.

Specifically, feature attribution reflects model knowledge in solving a specific task, which does not necessarily align with the true utility of input features as represented by the underlying data distribution.
This misalignment can arise because standard machine learning theory does not guarantee that a trained model will capture and exploit \textit{all relevant features} in accordance with their ground-truth importance.
As a result, residual information can persist in the manipulated data and distort evaluation outcomes, reflected as:
\begin{equation}\label{eq:unexpected}
    \tilde{I}(\boldsymbol{N}\backslash \boldsymbol{S};y)\gg 0
\end{equation}
Retraining after manipulation fits the model to the residual information, producing unexpectedly high accuracies and leading to an underestimation of explanation quality.
Practically, residual information can arise from various sources, such as model underfitting, which overlooks relevant features, or feature redundancy, which renders some informative features completely unused. 
While some factors can be controlled through careful experimental setup, we particularly highlight the \textit{Sign} issue, which can provably harm evaluation validity under mild conditions.


\subsection{The \textit{Sign} Issue}
Under the highest-first occlusion strategy, features with negative attributions survive the manipulation process.
Despite their negative scores, features with large attribution magnitudes are often highly task-relevant.
During retraining, such ``\textit{negative}'' features can leak substantial task-relevant information, leading to unexpectedly high performance of retrained models. 
We refer to the distortion caused by negatively attributed features as the ``\textit{Sign}'' issue.

In contrast to positively attributed features that support the target decision $y$, negative attributions arise when features serve as evidence for an alternative class $y^*\neq y$ while being shared with $y$.
We refer to such shared features as \textit{secondary evidence} for $y$, formally defined below.
\begin{definition}[Secondary Evidence]\label{def:secondaryEvidence}
    Two disjoint feature subsets $\boldsymbol{S_1}, \boldsymbol{S_2} \subseteq \boldsymbol{N}$ are called the \textit{primary} and \textit{secondary} evidence, respectively, for class $y$ if they satisfy:
    \[
        I(\boldsymbol{S_1};y)>I(\boldsymbol{S_2};y)\gg 0 \textrm{~~and~~} I(\boldsymbol{S_2};y^*) > I(\boldsymbol{S_2};y)
    \] 
\end{definition}
The second condition indicates the greater utility of $\boldsymbol{S_2}$ for $y^*$, which can lead to negative attributions w.r.t. $y$.
When removing features in descending attribution order, these negatively scored secondary features will remain and turn into primary evidence for predicting $y$, as formalized in Theorem~\ref{thm:swap} (see Appendix~\ref{apx:proof_swap} for the detailed proof).
\begin{theorem}[Increasing Utility of Secondary Features]\label{thm:swap}
    The mutual information between $\boldsymbol{S_2}$ and $y$ increases due to distribution shift after input manipulation:
    \[
        \tilde{I}(\boldsymbol{S_2};y) > I(\boldsymbol{S_2};y) \gg 0
    \]
\end{theorem}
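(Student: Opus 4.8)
The plan is to formalize the intuition that the manipulation process selectively removes the primary evidence $\boldsymbol{S_1}$ (whose positive attributions place it high in the occlusion order) while leaving the secondary evidence $\boldsymbol{S_2}$ intact (its negative attributions push it to the bottom of the order). After manipulation, the model is retrained on data where $\boldsymbol{S_1}$ is gone, so $\boldsymbol{S_2}$ becomes the dominant source of label-relevant signal. I would make this precise through a short chain of information-theoretic inequalities: first argue that under the highest-first occlusion rule the manipulated feature set $\boldsymbol{N}\backslash\boldsymbol{S}$ retains $\boldsymbol{S_2}$ but excludes $\boldsymbol{S_1}$ (this follows from Definition~\ref{def:secondaryEvidence}, since $I(\boldsymbol{S_1};y)>I(\boldsymbol{S_2};y)$ and the second condition forces the attributions on $\boldsymbol{S_2}$ to be non-positive); then show $\tilde I(\boldsymbol{S_2};y)\ge \tilde I(\boldsymbol{S_2};y) $ — more usefully, that the \emph{post-manipulation} mutual information of $\boldsymbol{S_2}$ with $y$ is bounded below by what $\boldsymbol{S_2}$ contributes once its ``competitor'' $\boldsymbol{S_1}$ has been stripped away.

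\textbf{The key steps, in order.} (1) Set up the manipulated distribution: removing $\boldsymbol{S_1}\subseteq\boldsymbol{S}$ replaces those coordinates with a baseline, so the effective predictive content about $y$ now rests on the surviving features, prominently $\boldsymbol{S_2}$. (2) Decompose the original mutual information: write $I(\boldsymbol{S_1},\boldsymbol{S_2};y)=I(\boldsymbol{S_1};y)+I(\boldsymbol{S_2};y\mid\boldsymbol{S_1})$ and, symmetrically, $=I(\boldsymbol{S_2};y)+I(\boldsymbol{S_1};y\mid\boldsymbol{S_2})$. The point is that in the original data, part of what $\boldsymbol{S_2}$ ``could say'' about $y$ is redundant with $\boldsymbol{S_1}$ — the model discounted $\boldsymbol{S_2}$ precisely because $\boldsymbol{S_1}$ already carried that signal, which is why $\boldsymbol{S_2}$ received negative attribution w.r.t.\ $y$. (3) Argue that after $\boldsymbol{S_1}$ is removed, this redundancy is ``released'': the relevant quantity becomes $I(\boldsymbol{S_2};y)$ computed in the absence of $\boldsymbol{S_1}$, which picks up the formerly-redundant contribution, giving $\tilde I(\boldsymbol{S_2};y) > I(\boldsymbol{S_2};y)$. (4) Establish the $\gg 0$ lower bound: since $I(\boldsymbol{S_2};y)\gg 0$ already by hypothesis and $\tilde I$ only increased, the conclusion $\tilde I(\boldsymbol{S_2};y)\gg 0$ follows immediately.

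\textbf{The main obstacle} is step (3): making rigorous the claim that removing $\boldsymbol{S_1}$ strictly \emph{increases} the information $\boldsymbol{S_2}$ carries about $y$. This is not true for arbitrary joint distributions — conditioning can raise or lower mutual information — so I expect the proof to invoke the structural hypotheses of Definition~\ref{def:secondaryEvidence} (in particular $I(\boldsymbol{S_2};y^*)>I(\boldsymbol{S_2};y)$, which encodes that $\boldsymbol{S_2}$'s signal is ``miscredited'' to the competing class $y^*$ in the presence of $\boldsymbol{S_1}$) together with a mild modeling assumption on how the baseline replacement acts, e.g.\ that removing $\boldsymbol{S_1}$ also suppresses the distractor configuration that made $y^*$ more likely. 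Concretely I would posit that conditioned on $\boldsymbol{S_1}$ being at baseline, the classes $y$ and $y^*$ are no longer confusable via $\boldsymbol{S_2}$, so the full discriminative content of $\boldsymbol{S_2}$ now transfers to $y$; this is the ``swap'' the theorem's title refers to. The remaining steps are then routine applications of the chain rule for mutual information and the non-negativity of conditional mutual information.
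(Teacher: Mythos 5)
There is a genuine gap, and it lies exactly where you flagged it: step (3). More importantly, the mechanism you are trying to formalize is not the one that makes the theorem true. Your argument attributes the increase in $\tilde{I}(\boldsymbol{S_2};y)$ to a ``redundancy release'' when $\boldsymbol{S_1}$ is stripped from $y$-instances, and you concede this cannot follow from the chain rule alone (conditioning can raise or lower mutual information), so you posit an extra modeling assumption about what happens when $\boldsymbol{S_1}$ sits at baseline. The paper needs no such assumption and, in fact, $\boldsymbol{S_1}$ plays essentially no role in its proof. The driver of the effect is the manipulation applied to the \emph{other} class: since $\boldsymbol{S_2}$ is primary, positively attributed evidence for $y^*$, highest-first occlusion removes $\boldsymbol{S_2}$ precisely on the $y^*$-instances, while it survives on the $y$-instances (where it is negatively attributed). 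In the manipulated data the co-occurrence event $(y^*=1,\boldsymbol{S_2}=1)$ is eliminated, so $\tilde{\bbP{}}(y=1\mid\boldsymbol{S_2}=1)=1$: the presence of $\boldsymbol{S_2}$ becomes an unambiguous indicator of $y$. That cross-class asymmetry of the occlusion is the ``swap,'' and your proposal never uses it; as written, your $\tilde{I}$ is effectively $I(\boldsymbol{S_2};y)$ after deleting $\boldsymbol{S_1}$, which is a different quantity and one for which the claimed strict inequality can fail.

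Concretely, the paper restricts to $\boldsymbol{S_2}$ shared between two classes, parameterizes the joint law by $\bbP{}(y=1)=\gamma$, $\bbP{}(\boldsymbol{S_2}=1)=p$, $\bbP{}(y=1,\boldsymbol{S_2}=1)=\alpha p$, notes that the label marginal is unchanged by occlusion so the claim reduces to $H(y\mid\boldsymbol{S_2})-\tilde{H}(y\mid\boldsymbol{S_2})>0$, and then computes both conditional entropies explicitly under the post-manipulation law (with $\tilde{\bbP{}}(y^*=1,\boldsymbol{S_2}=1)=0$). After regrouping and a first-order Taylor expansion, the difference collects into three manifestly negative terms (using $p>\gamma$, i.e., shared features span more than one class), whose negated sum gives the strict inequality; a short remark then extends this to features shared across more than two classes by grouping classes according to attribution sign. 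If you want to rescue your outline, replace your step (1)--(3) with an explicit description of how the occlusion acts on $y^*$-instances versus $y$-instances and carry that through the conditional-entropy computation; the chain-rule decomposition of $I(\boldsymbol{S_1},\boldsymbol{S_2};y)$ is not needed and does not by itself deliver the result. (A minor additional point: the second condition in Definition~\ref{def:secondaryEvidence} does not \emph{force} negative attributions on $\boldsymbol{S_2}$; the negative sign is part of the setting the theorem addresses, not a consequence you can derive and then lean on.)
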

The increase in utility pinpoints the emergence of residual information, as described in Equation~\eqref{eq:unexpected}.
Notably, the presence of shared features across multiple classes is common in classification tasks, where the targets often involve overlapping low-level concepts.
The statement in Theorem~\ref{thm:swap} is related to the \textit{information swapping} issue raised by \citet{lundberg2020local} in the context of binary classification; our discussion formalizes and generalizes this concern to a broader multi-class setting.

\begin{figure*}
    \centering
    \includegraphics[width=1.\textwidth]{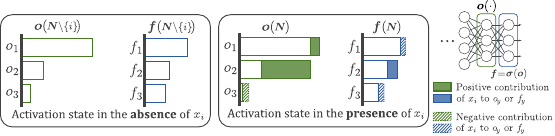}
    \caption{Example of a weak positive contributor. The feature $x_i$ is a weak contributor to class $1$: it positively influences $o_1$ but yields a negative contribution to $f_1$.}\label{fig:softmax_example}
\end{figure*}
\subsection{Weak Positive Contributor}
Figure~\ref{fig:sign_example} qualitatively illustrates information leakage caused by the \textit{Sign} issue under the highest-first occlusion strategy.
Although Theorem~\ref{thm:swap} establishes the utility of negatively attributed features, it may still appear counterintuitive when regions representing the target object receive negative attribution scores.
This subsection concretizes a case where even positively correlated features can receive negative attributions, thereby demonstrating that the target-overlapping negatives are not necessarily indicative of poor explanation quality.

Let $\boldsymbol{o}(\boldsymbol{N})$ be the output of the final dense layer in a classifier (i.e., the classification head) and $o_y(\boldsymbol{N})$ be the activation corresponding to class $y$. 
A common practice in classification tasks is to apply a softmax function to normalize the raw model outcomes at inference time: $\boldsymbol{f}(\boldsymbol{N})=\boldsymbol{\sigma}(\boldsymbol{o}(\boldsymbol{N}))$.
However, this softmax normalization can unintentionally result in target-overlapping negative attributions.
In certain cases, even features that positively activate the output node $o_y(\boldsymbol{N})$ can dramatically receive negative attributions for the final prediction $f_y(\boldsymbol{N})$.
We refer to such features as \textit{weak positive contributors}.
\begin{definition}[Weak Positive Contributor]\label{def:wpc} 
    A feature $x_i$ is called a \textit{weak positive contributor} to class $y$ if its attribution $\xi^{o_y}_{i}$ to the output logit $o_y$ satisfies:
    \begin{equation}\label{eq:wpc}
        \rlap{$\underbrace{\phantom{1 < \exp(\xi^{o_y}_{i})}}_{\textrm{Positive Contribution}}$}
        1 < \overbrace{\exp(\xi^{o_y}_{i}) < \mathbb{E}_{y^*\neq y}\Bigl[\exp(\xi_i^{o_{y^*}})\Bigr]}^{\textrm{Weak Contribution}}
    \end{equation}
\end{definition}
By writing the expectation $\mathbb{E}_{y^*\neq y}$, we interpret the final model output $\boldsymbol{f}(\boldsymbol{N})$, normalized by a softmax layer, as a probability distribution over labels conditioned on $y^*\neq y$.
The first inequality in \eqref{eq:wpc} indicates the positive contribution to $o_y$, whereas the second implies that this contribution is weak relative to its expected exponential contributions to the remaining classes.
Please note that $\xi_i^{o_y}$ refers to the ground-truth attribution, i.e., the quantity that feature attribution methods seek to estimate. 
Therefore, the discussion of weak positive contributors is independent of any specific explanation method and reflects intrinsic properties of the model's predictive behavior.

\begin{theorem}[Negative Attribution]\label{thm:wpc}
  The attribution $\xi_{i}^{f_y}$ of a weak positive contributor w.r.t. the final prediction is negative, despite its positive attribution to $o_y$.
\end{theorem}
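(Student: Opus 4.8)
The plan is to turn the claim into a single scalar inequality by expressing the softmax output through the logit attributions. I model the removal of feature $x_i$ as the operation that subtracts its attribution from every logit: writing $\boldsymbol{N}\backslash\{x_i\}$ for the input with $x_i$ ablated, I take $o_{y'}(\boldsymbol{N}) = o_{y'}(\boldsymbol{N}\backslash\{x_i\}) + \xi_i^{o_{y'}}$ for every class $y'$ (this is the ``ground-truth,'' additive reading of attribution flagged in the paragraph after Definition~\ref{def:wpc}), and correspondingly $\xi_i^{f_y} = f_y(\boldsymbol{N}) - f_y(\boldsymbol{N}\backslash\{x_i\})$. With these conventions the target becomes $f_y(\boldsymbol{N}) < f_y(\boldsymbol{N}\backslash\{x_i\})$, i.e.\ ablating a weak positive contributor should \emph{raise} the predicted probability of $y$ — which is exactly the counterintuitive phenomenon the theorem asserts.

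Next I introduce the abbreviations $a_{y'} = \exp\bigl(o_{y'}(\boldsymbol{N}\backslash\{x_i\})\bigr) > 0$ and $r_{y'} = \exp\bigl(\xi_i^{o_{y'}}\bigr) > 0$, so that $f_y(\boldsymbol{N}\backslash\{x_i\}) = a_y/\sum_{y'} a_{y'}$ and $f_y(\boldsymbol{N}) = a_y r_y / \sum_{y'} a_{y'} r_{y'}$. Clearing the (strictly positive) denominators, cancelling the common factor $a_y$, and noting that the $y'=y$ term drops out, the inequality $f_y(\boldsymbol{N}) < f_y(\boldsymbol{N}\backslash\{x_i\})$ is seen to be equivalent to
\[
  \sum_{y^*\neq y} a_{y^*}\,(r_{y^*} - r_y) > 0
  \qquad\Longleftrightarrow\qquad
  r_y < \frac{\sum_{y^*\neq y} a_{y^*}\,r_{y^*}}{\sum_{y^*\neq y} a_{y^*}}.
\]
The right-hand side is precisely $\mathbb{E}_{y^*\neq y}\bigl[\exp(\xi_i^{o_{y^*}})\bigr]$ once the expectation is read — as in the paragraph preceding Theorem~\ref{thm:wpc} — against the softmax distribution over the non-target classes, here the normalized weights $a_{y^*}$. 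Hence the second inequality of Definition~\ref{def:wpc} delivers exactly the required strict inequality (the first inequality $1 < \exp(\xi_i^{o_y})$ merely records the positive logit attribution that makes the conclusion surprising), and multiplying back through yields $\xi_i^{f_y} = f_y(\boldsymbol{N}) - f_y(\boldsymbol{N}\backslash\{x_i\}) < 0$, which is the theorem. A slightly slicker route to the same point is to pass to $\log f_y = o_y - \log\sum_{y'}\exp(o_{y'})$: since $\log$ is increasing and $f_y>0$, the sign of $\xi_i^{f_y}$ equals that of $\xi_i^{o_y} - \xi_i^{\mathrm{LSE}}$, and $\xi_i^{\mathrm{LSE}}$ is exactly $\log$ of the expectation above.

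The step I expect to need the most care is the first one — making precise what ``the attribution $\xi_i^{f_y}$ w.r.t.\ the final prediction'' means and in what exact sense removing $x_i$ shifts each logit by its attribution $\xi_i^{o_{y'}}$. I would justify this either by adopting the leave-one-out / marginal-contribution notion of ground-truth attribution directly, or by invoking the completeness (efficiency) axiom shared by IG, DeepLIFT, and Shapley-value attributions together with local affinity of $\boldsymbol{o}$ around $\boldsymbol{N}$, under which the two notions coincide; once the setup is fixed, the remaining algebra is routine and the positivity of every $a_{y'}$ and $r_{y'}$ (both exponentials) makes each cross-multiplication sign-safe. I would also add a one-line remark that the reference distribution for $\mathbb{E}_{y^*\neq y}$ is the softmax of the \emph{ablated} logits on $\boldsymbol{N}\backslash\{x_i\}$ restricted to the non-target classes, so that the characterization — and hence the theorem — is exact rather than merely approximate.
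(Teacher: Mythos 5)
Your proposal is correct and follows essentially the same route as the paper's own proof: assume the additive (ground-truth) contribution $o_{y'}(\boldsymbol{N}) = o_{y'}(\boldsymbol{N}\backslash\{i\}) + \xi_i^{o_{y'}}$, reduce $f_y(\boldsymbol{N}) < f_y(\boldsymbol{N}\backslash\{i\})$ by clearing the positive softmax denominators and cancelling the target-class term, and identify the resulting weights $a_{y^*}/\sum_{y^*\neq y} a_{y^*} = f_{y^*}(\boldsymbol{N}\backslash\{i\})/\bigl(1 - f_y(\boldsymbol{N}\backslash\{i\})\bigr)$ with the conditional distribution defining $\mathbb{E}_{y^*\neq y}$. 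Your closing remark that this expectation is taken under the \emph{ablated}-input softmax restricted to non-target classes is exactly how the paper's appendix proof instantiates it, so the argument matches in substance as well as in detail.
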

Figure~\ref{fig:softmax_example} showcases that a positively associated feature can negatively affect the prediction confidence of the corresponding class, as stated in Theorem~\ref{thm:wpc} (see Appendix~\ref{apx:proof_wpc} for proof and further details).


\section{Keep and Fine-tune (\kaft{})}\label{sec:kaft}
Based on the previous discussion, a direct remedy for the \textit{Sign} issue is to perform feature manipulation in descending order of attribution magnitudes.
This variant is denoted by \roarAbs{}, where the absolute value operator implies that only attribution magnitudes are used to rank features.
We refer to this occlusion strategy as \textit{relevant}-first occlusion, since high-magnitude attributions reflect feature relevance, regardless of their signs.
However, \roarAbs{} remains sensitive to residual information arising from redundant features, which are informative but receive low attribution scores with their significance masked by redundancies.

An alternative is to reframe the occlusion strategy completely, thereby relieving the evaluation scheme from the over-restrictive expectation of $\tilde{I}(\boldsymbol{N}\backslash\boldsymbol{S};y)\approx 0$.
Instead of removing the most important features, we argue that retaining the top-ranked features for retraining yields more reliable assessments of explanation quality.
\underline{Ke}ep \underline{a}nd \underline{r}etrain (\kear{}) reframes the evaluation question as follows:
\begin{myquestion}[\kear{}]\label{rq_kear}
  Does the tested explanation method correctly identify influential features for model decisions?
\end{myquestion}
The lowest-first occlusion strategy was first mentioned by \citet{hooker2019benchmark}. Appendix~\ref{apx:extended_rw} further discusses the role of \kear{} in prior work and interprets the differences in empirical observations.
By reversing the occlusion priority, an effective explanation method should capture relevant information learned by the target model, resulting in less performance degradation after retraining with the same proportion of retained features. 
Compared to \roar{}, Question~\ref{rq_kear} loosens the assumption of exact model-data alignment and shifts the focus towards model behavior.


\begin{figure*}
  \centering  
  \includegraphics[width=1.\textwidth]{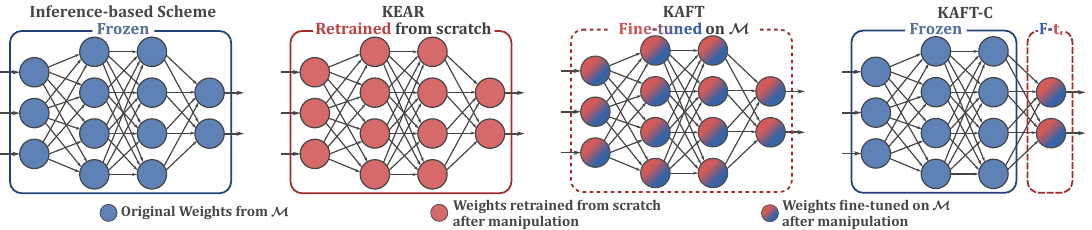}
  \caption{Visualization of model updates under different evaluation schemes.
  The inference scheme is sensitive to distribution shifts as it does not update the tested model.
  \kaft{} and its variants overcome this limitation while mitigating distortions inherent in other retraining schemes.
  Particularly, \kaftc{} evaluates explanation quality by assessing the utility of preserved features through restricted updates, striking a balance between resolving distribution shifts and maintaining focus on model behavior.
  }\label{fig:kear_overview}
\end{figure*}

\subsection{Retraining? Fine-tuning!}
Parallel to distortion in evaluation results, the heavy computational cost also limits the practical use of retraining schemes for explainer selection and benchmarking.
The evaluation costs are twofold: the expense of deriving explanations for numerous training data, and the overhead of the multiple retraining epochs.
Unlike inference schemes, which typically require explanations for only the test set or even a subset of it, retraining schemes must derive explanations for the entire dataset, including training, validation, and test splits, to reproduce the training environment.
Additionally, these manipulated entries are visited repeatedly during retraining to refine model performance.
While such costs are manageable in small-scale settings, efficiency concerns become significant as training pipelines become more complicated.
This challenge is pronounced when benchmarking explainers on downstream models fine-tuned from pretrained versions, where reproducing the entire training process becomes infeasible.

Given that retraining serves to adapt the target model to the disrupted data distribution, we propose replacing full retraining with model fine-tuning.
When the manipulated dataset is viewed as representing a different distribution that preserves partial information overlap with the original data, fine-tuning offers a lightweight alternative for model adaptation under distribution shift.
By leveraging the explained model as initialization, fine-tuning requires fewer training samples and epochs.
More specifically, \underline{k}eep \underline{a}nd \underline{f}ine-\underline{t}uning (\kaft{}) creates manipulated copies of a subset of training samples and, instead of training from scratch, fine-tunes the explained model on the manipulated data.
As in the original retraining scheme, the performance of the fine-tuned model serves as an indicator of explanation quality.

Following this direction, we further simplify the evaluation scheme by restricting \underline{k}eep \underline{a}nd \underline{f}ine-\underline{t}uning to the \underline{c}lassification head (\kaftc{})\footnote{Abbreviation naming rules are detailed in Appendix~\ref{apx:exp_details}.}.
This simplification is inspired by the observation that the hidden layers of a model function primarily as feature extractors, deriving and summarizing high-level representations that support the final decisions at the classification head.
An effective explainer should capture input features that contribute to these informative high-level representations (from the model perspective) and ensure that they are preserved during explanation-guided manipulation.
Although input perturbation may degrade model performance by disrupting contextual dependencies, the feature extraction performed by the hidden layers should remain at least partially effective if the most relevant input components are preserved.
Consequently, adapting the classification head allows the model to reuse the extracted representations under the altered context, achieving higher classification accuracy without full model updates.
From the efficiency perspective, when fine-tuning with $10\%$ of the data and $10\%$ of the training epochs used for full retraining, \kaftc{} achieves a $100\times$ speedup over \kear{}.
Figure~\ref{fig:kear_overview} provides an overview of the three retraining alternatives and compares them to the inference-based evaluation scheme.

Compared to full retraining or unrestricted fine-tuning, fine-tuning only the classification head further simplifies the retraining process and strengthens the connection to the explained model.
This modification better aligns the evaluation process with the goal of feature attribution methods.
By freezing the hidden layers of the tested model, the evaluation concentrates on the original behavior of the target model without completely altering its internal characteristics.
This restriction further mitigates the distortion caused by feature redundancy when applying \textit{relevant-first} occlusion, as unused features are no longer reorganized through full retraining.
Following this direction, we propose another alternative evaluation scheme \raftcAbs{} (\underline{r}emove \underline{a}nd \underline{f}ine-\underline{t}uning on the \underline{c}lassification head following attribution magnitude).
Compared to \kaftc{}, which reflects the utility of the highest-attributed features, \raftcAbs{} assesses the ability of an explainer to identify irrelevant features, providing a complementary perspective for assessing explanation quality.

\section{Experiments}
To demonstrate the validity of the proposed evaluation schemes and their differences from \roar{}, we begin with small-scale experiments with a carefully selected set of explainers.
These explainers are chosen based on their different theoretical backgrounds, serving as a reference for explanation quality.
Following the small-scale tests, we adopt \kaftc{} and \raftcAbs{}, the most efficient retraining variants, for benchmarking an extended set of attribution methods in large-scale settings.

Across all tests, full retraining (\roar{} and \kear{}) is performed on the complete dataset, whereas $20\%$ and $10\%$ of the training samples are used for fine-tuning and classification-head-only fine-tuning, respectively.
All reported values are averages over five repetitions for better reliability of the results.

\subsection{Small-scale Experiments}

\subsubsection{Datasets and Classifiers} 
The small-scale experiments are conducted on three publicly available datasets: MNIST~\citep{lecun1998gradient}, CIFAR10~\citep{krizhevsky2009learning}, and STL10~\citep{coates2011analysis}, covering input types with varying color models and resolutions.
A simple CNN, WideResNet~\citep{zagoruyko2016wide}, and EfficientNet-B0~\citep{tan2019efficientnet} are trained on MNIST, CIFAR10, and STL10, respectively.
All classifiers are trained from scratch for efficient reproduction of the exact training environments during retraining.

\subsubsection{Feature Attribution Methods}
Three gradient-based attribution methods with various theoretical backgrounds are selected: Vanilla Gradient (\textbf{VG})~\citep{simonyan2014deep}, \textsc{SmoothGrad} (\textbf{SG})~\citep{smilkov2017smoothgrad}, and Integrated Gradients (\textbf{IG})~\citep{sundararajan2017axiomatic}.
When configured with identical query budgets, SG and IG share the same complexity. 
However, IG is distinguished by explicitly modeling feature absence with a baseline.
This baseline, which also defines feature absence during input manipulation, anchors the attribution process, allowing more accurate quantification of how feature presence contributes relative to absence.
Founded on its theoretical strength, IG is expected to outperform SG, which in turn should outperform VG by reducing sensitivity to gradient noise with smoothing.
Appendix~\ref{apx:exp_details} provides additional experimental details about the explainers and models, including implementation specifics and configurations for (re-)training and fine-tuning.

\begin{figure*}
  \centering
  \includegraphics[width=1.\textwidth]{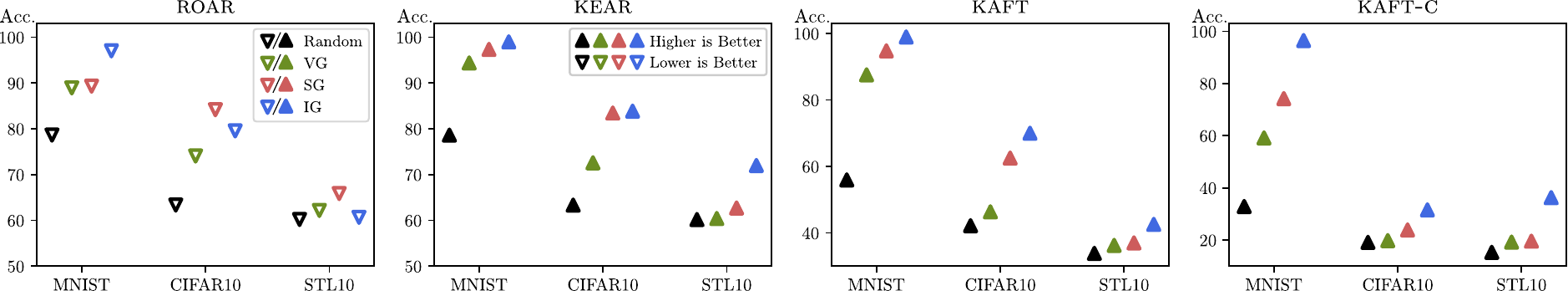}
  \caption{Explainer performance reflected by different schemes with $90\%$ of features removed. Evaluation results of each scheme are grouped by test case.}\label{fig:smallScal90p}
\end{figure*}

\begin{figure*}
  \centering
  \includegraphics[width=1.\textwidth]{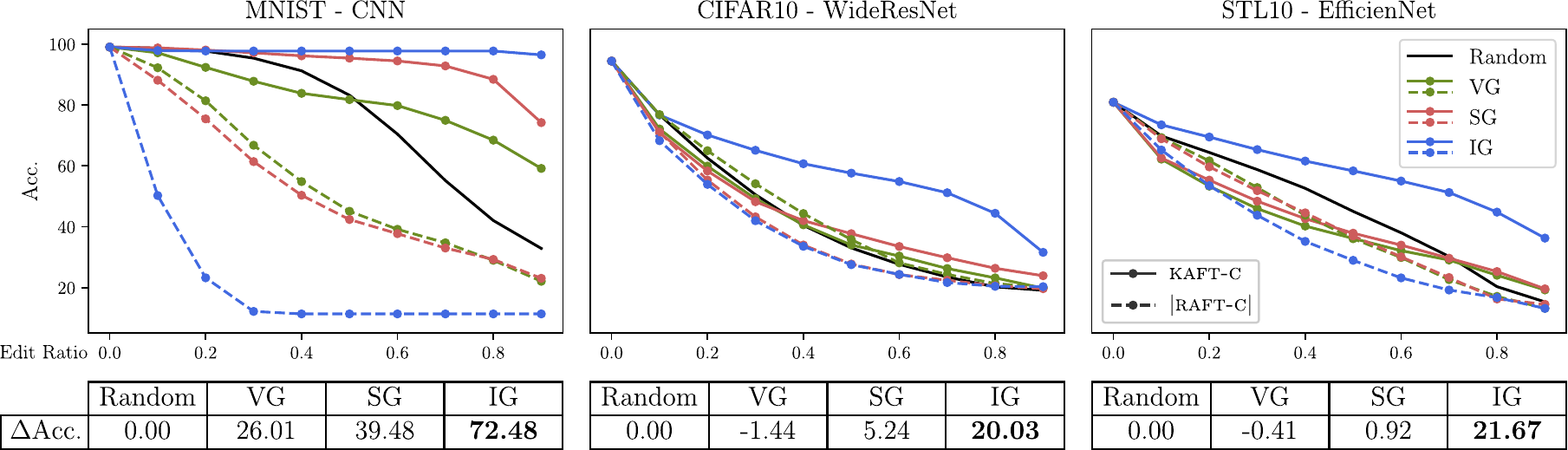}
  \caption{Explanation quality in \textit{small-scale} settings tested by \kaftc{} and \raftcAbs. The embedded tables present $\Delta\textrm{Acc.}$, which quantifies explainer performance.
  $\Delta\textrm{Acc.}$ represents the difference in prediction power, computed as the area between the degradation curves produced by \kaftc{} and \raftcAbs{}.}\label{fig:ratioSmall}
\end{figure*}

\subsubsection{Small-scale Results}
We start with empirically setting the manipulation ratio to $90\%$ and retrain the models on the manipulated datasets. 
Figure~\ref{fig:smallScal90p} illustrates the retraining performances\footnote{Error bars are not presented for visual clarity. Please refer to Appendix~\ref{apx:additional_results} for the complete results and raw numerical values.}.
As an indicator of explanation quality, the interpretation of retraining accuracies depends on deletion priority: for \kear{}, higher accuracy suggests better explanation quality, as it reflects the preservation of influential features; for \roar{}, lower accuracy is preferred, which indicates the effective removal of task-relevant features.

First, we concentrate on the relative ranking of competitors within each evaluation scheme. 
Our results reproduce the findings of~\citet{hooker2019benchmark}, showing that all competitors perform worse than random when tested under the \roar{} scheme, as demonstrated by the limited performance degradation.
As discussed in Section~\ref{sec:motivation}, these misleading assessments arise from residual information, where the remaining negatively attributed features lead to information leakage and, consequently, unexpectedly high retraining accuracies. 
Moreover, the variability in explainer rankings across test cases --- for example, IG ranking highest with STL10-EfficientNet but lowest with MNIST-CNN --- suggests that \roar{}'s assumption (inherited from Question~\ref{rq_roar}) is highly sensitive to the characteristics of tested models.
This sensitivity leads to varying degrees of distortion depending on the test scenario.
In contrast, the results by \kaft{} and other variants of lowest-first occlusion match the theoretically grounded expectation: IG consistently outperforms other explainers due to its explicit use of a baseline.
Regardless of how a model is updated, the \kaft{} family delivers coherent assessments of explanation quality, as reflected by the consistent competitor rankings. 

To further demonstrate the effectiveness of gradient-based approaches and to show that explainers capture not only influential features but also distinguish them from irrelevant ones, we adopt \kaftc{} and \raftcAbs{}, reporting evaluation results across manipulation ratios ranging from $0.1$ to $0.9$ in increments of $0.1$. 
Figure~\ref{fig:ratioSmall} shows model performance degradation curves as the manipulation ratio increases. 
The solid and dashed lines correspond to performance degradation under different manipulation priorities.
For an effective explainer, the solid line should lie above the random baseline (black), while the dashed line should fall below.
Among the competitors, only IG exhibits curves that are well-separated by the random baseline across all three test cases, indicating the effectiveness of its explanations.
Complementing the line plots, explainer performance is further quantified by the area between the solid and dashed lines associated with each explainer.
This area reflects the difference in prediction power between irrelevant-first and relevant-first manipulations.
The table embedded in Figure~\ref{fig:ratioSmall} confirms the superiority of IG with the largest $\Delta\textrm{Acc.}$, highlighting its capability in identifying influential features to model decisions. 
Appendix~\ref{apx:additional_results} provides additional qualitative examples that visualize the manipulated inputs, emphasizing the \textit{Sign} issue. 

\begin{figure*}
  \centering
  \includegraphics[width=1.\textwidth]{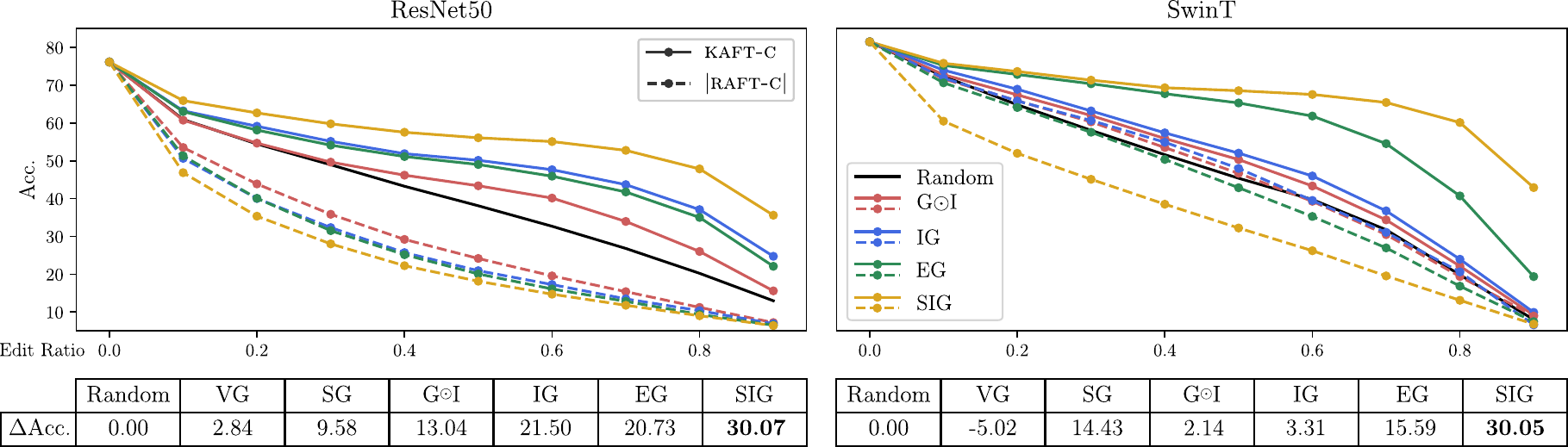}
  \caption{Explanation quality in \textit{large-scale} settings assessed by \kaftc{} and \raftcAbs{}. The plots present a selected subset of competitors for clarity. 
  The tables report $\Delta\textrm{Acc.}$ for all tested explainers.}\label{fig:ratioLarge}
\end{figure*}

\subsection{Large-scale Experiments}\label{sec:exp_large}
\subsubsection{Additional Competitors}
We extended the competitor set with Gradient$\odot$Input (\textbf{G$\odot$I})~\citep{shrikumar2017learning}, Expected Gradients (\textbf{EG})~\citep{erion2021improving}, and Smoothed Integrated Gradients (\textbf{SIG}).
The last approach is an ensemble of IG and SG, combining theoretical grounding with denoising effects.
All selected competitors are widely used and generalizable feature attribution methods.

\subsubsection{Dataset and Classifiers}
We benchmark the competitors on classifiers trained with ImageNet1k~\citep{russakovsky2015imagenet}. 
For architectural diversity, we consider \textit{ResNet50}~\citep{he2016deep}, a CNN-based model, and \textit{SwinT}~\citep{liu2021swin}, a transformer-based model.
For both architectures, pretrained versions are used.

\subsubsection{Large-scale Results}
We repeat the \kaftc{} and \raftcAbs{} evaluations in the large-scale settings, with results presented in Figure~\ref{fig:ratioLarge}.
For ResNet50, attribution methods that explicitly incorporate a baseline generally outperform those that do not.
Notably, G$\odot$I, which can be viewed as a special case of IG with a single observation on the interpolation path, improves explanation quality by amplifying raw gradients with the corresponding input.
In both cases, limited performance degradation is achieved by the best-performing explainer (SIG) under the restrictive fine-tuning setting.
This observation supports the use of fine-tuning as an effective and substantially more efficient alternative to full retraining for handling distribution shifts.

On the other hand, IG exhibits a performance collapse when tested on SwinT.
As shown in the right panel of Figure~\ref{fig:ratioLarge}, the degradation curves produced by IG are nearly indistinguishable from the random baseline in both manipulation orders, indicating a failure to identify features relevant to model decisions.
We consider two possible causes for the failure of IG on SwinT: 1.~a long plateaux of gradient saturation, and 2.~the cancellation effect due to feature interactions.
First, a key motivation of the integration in IG is to resolve gradient saturation~\citep{sundararajan2017axiomatic}.
However, when the interpolation path between an explicand and the chosen baseline traverses a prolonged saturation region, the evenly interpolated instances will be dominated by the saturated gradients.
As a result, the integrated gradients become biased towards the saturation, failing to address the issue.
The similar performance of IG and G$\odot$I on SwinT supports this interpretation --- a long saturation plateau undermines the effectiveness of integration over an even interpolation, reducing IG to a single-sample explanation, as in the case of G$\odot$I.
Second, the straightline integration path can cause underestimation of feature importance due to cancellation effects from feature interactions. 
Consider the simple function $f(x_1,x_2)={(x_1 - x_2)}^2$ as a concrete example.
When explaining the decision at $(1, 1)$ relative to the baseline $(0, 0)$, IG assigns $0$ attribution to both features, as their opposing contributions cancel each other out along the interpolation path, despite both being influential to the outcome.
This limitation can be mitigated by averaging over multiple integration paths that deviate from the straightline, which decomposes interactions into individual contributions.
This simple solution also underlies the improved performance of SIG:
by adding noise to the interpolated points, SIG implicitly averages over observations from diverse paths, thereby revealing interaction effects and recovering the performance of IG.

\section{Conclusion}
This work addresses the gap between the empirical assessments by retraining-based schemes and the theoretical foundations of gradient-based approaches. 
By identifying the cause of evaluation distortion, we propose several alternatives that reframe the evaluation objective to resolve this issue.
Among them, the fine-tuning variant \kaftc{} largely reduces the computational overhead associated with retraining-based evaluation.
Our empirical results generally align with theoretical expectations for the tested explainer --- except for the test on SwinT, where IG exhibits a performance collapse.
We discuss two possible causes of the collapse: saturation plateaux and cancellation effects, both linked to the integration path.
The discussion on the limitation highlights the need for further investigation into path selection and allocation of feature interactions.
Overall, this work provides a new toolset for explanation evaluation, which differs from traditional inference-based schemes, and offers a complementary perspective for benchmarking attribution methods.

\section*{Acknowledgement}
Yi Cai, Thibaud Ardoin, and Gerhard Wunder were supported by the Federal Ministry of Education and Research of Germany (BMBF) in the program of ``Souver\"an. Digital. Vernetzt.'', joint project ``AIgenCY: Chances and Risks of Generative AI in Cybersecurity'', project identification number 16KIS2013. 
Mayank Gulati and Gerhard Wunder were supported by BMBF joint project ``6G-RIC: 6G Research and Innovation Cluster'', project identification number 16KISK020K.


\bibliographystyle{abbrvnat}
\bibliography{ref}

@article{hedstrommeta,
    title={The meta-evaluation problem in explainable {AI}: Identifying reliable estimators with MetaQuantus},
    author={Hedstr{\"o}m, Anna and Bommer, Philine and Wickstr{\o}m, Kristoffer K and Samek, Wojciech and Lapuschkin, Sebastian and H{\"o}hne, Marina M-C},
    journal={Transactions on Machine Learning Research},
    issn={2835-8856},
    year={2023},
}

@article{hooker2019benchmark,
  title={A benchmark for interpretability methods in deep neural networks},
  author={Hooker, Sara and Erhan, Dumitru and Kindermans, Pieter-Jan and Kim, Been},
  journal={Advances in Neural Information Processing Systems},
  volume={32},
  year={2019}
}

@inproceedings{jain2022missingness,
  title={Missingness bias in model debugging},
  author={Jain, Saachi and Salman, Hadi and Wong, Eric and Zhang, Pengchuan and Vineet, Vibhav and Vemprala, Sai and Madry, Aleksander},
  booktitle={International Conference on Learning Representations},
  year={2022}
}

@inproceedings{rong2022consistent,
  title={A consistent and efficient evaluation strategy for attribution methods},
  author={Rong, Yao and Leemann, Tobias and Borisov, Vadim and Kasneci, Gjergji and Kasneci, Enkelejda},
  booktitle={Proceedings of the 39th International Conference on Machine Learning},
  pages={18770--18795},
  year={2022},
  organization={PMLR}
}

@inproceedings{parkgeometric,
  title={Geometric remove-and-retrain (GOAR): Coordinate-invariant explainable {AI} assessment},
  author={Park, Yong-Hyun and Seo, Junghoon and Park, Bomseok and Lee, Seongsu and Jo, Junghyo},
  booktitle={XAI in Action: Past, Present, and Future Applications},
  year={2023}
}

@article{lundberg2020local,
  title={From local explanations to global understanding with explainable AI for trees},
  author={Lundberg, Scott M and Erion, Gabriel and Chen, Hugh and DeGrave, Alex and Prutkin, Jordan M and Nair, Bala and Katz, Ronit and Himmelfarb, Jonathan and Bansal, Nisha and Lee, Su-In},
  journal={Nature Machine Intelligence},
  volume={2},
  number={1},
  pages={56--67},
  year={2020},
  publisher={Nature Publishing Group}
}

@article{samek2016evaluating,
  title={Evaluating the visualization of what a deep neural network has learned},
  author={Samek, Wojciech and Binder, Alexander and Montavon, Gr{\'e}goire and Lapuschkin, Sebastian and M{\"u}ller, Klaus-Robert},
  journal={IEEE Transactions on Neural Networks and Learning Systems},
  volume={28},
  number={11},
  pages={2660--2673},
  year={2016},
  publisher={IEEE}
}

@article{montavon2018methods,
  title={Methods for interpreting and understanding deep neural networks},
  author={Montavon, Gr{\'e}goire and Samek, Wojciech and M{\"u}ller, Klaus-Robert},
  journal={Digital Signal Processing},
  volume={73},
  pages={1--15},
  year={2018},
  publisher={Elsevier}
}

@inproceedings{cai2024gradient,
  title={On gradient-like explanation under a black-box setting: When black-box explanations become as good as white-box},
  author={Cai, Yi and Wunder, Gerhard},
  booktitle={Proceedings of the 41st International Conference on Machine Learning},
  pages={5360--5382},
  year={2024},
  organization={PMLR}
}

@inproceedings{cai2025gefa,
  title={{GEFA}: A general feature attribution framework using proxy gradient estimation},
  author={Cai, Yi and Ardoin, Thibaud and Wunder, Gerhard},
  booktitle={Proceedings of the 42nd International Conference on Machine Learning},
  pages={6165--6192},
  year={2025},
  organization={PMLR}
}

@inproceedings{muzellec2024saliency,
  title={Saliency strikes back: How filtering out high frequencies improves white-box explanations},
  author={Muzellec, Sabine and Fel, Thomas and Boutin, Victor and And{\'e}ol, L{\'e}o and Vanrullen, Rufin and Serre, Thomas},
  booktitle={Proceedings of the 41st International Conference on Machine Learning},
  pages={37041--37075},
  year={2024},
  organization={PMLR}
}

@article{brocki2023feature,
  title={Feature perturbation augmentation for reliable evaluation of importance estimators in neural networks},
  author={Brocki, Lennart and Chung, Neo Christopher},
  journal={Pattern Recognition Letters},
  volume={176},
  pages={131--139},
  year={2023},
  publisher={Elsevier}
}

@article{yeh2019fidelity,
  title={On the (in) fidelity and sensitivity of explanations},
  author={Yeh, Chih-Kuan and Hsieh, Cheng-Yu and Suggala, Arun and Inouye, David I and Ravikumar, Pradeep K},
  journal={Advances in Neural Information Processing Systems},
  volume={32},
  year={2019}
}

@inproceedings{zeiler2014visualizing,
  title={Visualizing and understanding convolutional networks},
  author={Zeiler, Matthew D and Fergus, Rob},
  booktitle={European conference on computer vision},
  pages={818--833},
  year={2014},
  organization={Springer}
}

@article{bach2015pixel,
  title={On pixel-wise explanations for non-linear classifier decisions by layer-wise relevance propagation},
  author={Bach, Sebastian and Binder, Alexander and Montavon, Gr{\'e}goire and Klauschen, Frederick and M{\"u}ller, Klaus-Robert and Samek, Wojciech},
  journal={PloS One},
  volume={10},
  number={7},
  pages={e0130140},
  year={2015},
  publisher={Public Library of Science San Francisco, CA USA}
}

@inproceedings{bhatt2020evaluating,
  title={Evaluating and aggregating feature-based model explanations},
  author={Bhatt, Umang and Weller, Adrian and Moura, Jos{\'e} MF},
  booktitle={Proceedings of the 29th International Joint Conference on Artificial Intelligence},
  pages={3016--3022},
  year={2020},
  organization={International Joint Conferences on Artificial Intelligence Organization}
}

@inproceedings{petsiuk2018rise,
  title={{RISE}: Randomized input sampling for explanation of black-box models},
  author={Petsiuk, Vitali and Das, Abir and Saenko, Kate},
  booktitle={Proceeedings of the British Machine Vision Conference 2018},
  year={2018},
  organization={British Machine Vision Association}
}

@inproceedings{wang2024benchmarking,
  title={Benchmarking deletion metrics with the principled explanations},
  author={Wang, Yipei and Wang, Xiaoqian},
  booktitle={Proceedings of the 41st International Conference on Machine Learning},
  pages={51569--51595},
  year={2024},
  organization={PMLR}
}

@inproceedings{sundararajan2017axiomatic,
  title={Axiomatic attribution for deep networks},
  author={Sundararajan, Mukund and Taly, Ankur and Yan, Qiqi},
  booktitle={Proceedings of the 34th International Conference on Machine Learning},
  pages={3319--3328},
  year={2017},
  organization={PMLR}
}

@inproceedings{sundararajan2020many,
  title={The many Shapley values for model explanation},
  author={Sundararajan, Mukund and Najmi, Amir},
  booktitle={Proceedings of the 37th International Conference on Machine Learning},
  pages={9269--9278},
  year={2020},
  organization={PMLR}
}

@article{erion2021improving,
  title={Improving performance of deep learning models with axiomatic attribution priors and expected gradients},
  author={Erion, Gabriel and Janizek, Joseph D and Sturmfels, Pascal and Lundberg, Scott M and Lee, Su-In},
  journal={Nature Machine Intelligence},
  volume={3},
  number={7},
  pages={620--631},
  year={2021},
  publisher={Nature Publishing Group UK London}
}

@article{lecun1998gradient,
  title={Gradient-based learning applied to document recognition},
  author={LeCun, Yann and Bottou, L{\'e}on and Bengio, Yoshua and Haffner, Patrick},
  journal={Proceedings of the IEEE},
  volume={86},
  number={11},
  pages={2278--2324},
  year={1998},
  publisher={IEEE}
}

@unpublished{krizhevsky2009learning,
  title={Learning multiple layers of features from tiny images},
  author={Krizhevsky, Alex and Hinton, Geoffrey},
  note={Technical Report, University of Toronto},
  year={2009}
}

@inproceedings{coates2011analysis,
  title={An analysis of single-layer networks in unsupervised feature learning},
  author={Coates, Adam and Ng, Andrew and Lee, Honglak},
  booktitle={Proceedings of the 14th International Conference on Artificial Intelligence and Statistics},
  pages={215--223},
  year={2011},
  organization={PMLR}
}

@article{russakovsky2015imagenet,
  title={ImageNet large scale visual recognition challenge},
  author={Russakovsky, Olga and Deng, Jia and Su, Hao and Krause, Jonathan and Satheesh, Sanjeev and Ma, Sean and Huang, Zhiheng and Karpathy, Andrej and Khosla, Aditya and Bernstein, Michael and others},
  journal={International Journal of Computer Vision},
  volume={115},
  pages={211--252},
  year={2015},
  publisher={Springer}
}

@inproceedings{zagoruyko2016wide,
  title={Wide residual networks},
  author={Zagoruyko, Sergey and Komodakis, Nikos},
  booktitle={Proceeedings of the British Machine Vision Conference 2016},
  year={2016},
  organization={British Machine Vision Association}
}

@inproceedings{tan2019efficientnet,
  title={EfficientNet: Rethinking model scaling for convolutional neural networks},
  author={Tan, Mingxing and Le, Quoc},
  booktitle={Proceedings of the 36th International Conference on Machine Learning},
  pages={6105--6114},
  year={2019},
  organization={PMLR}
}

@inproceedings{he2016deep,
  title={Deep residual learning for image recognition},
  author={He, Kaiming and Zhang, Xiangyu and Ren, Shaoqing and Sun, Jian},
  booktitle={Proceedings of the IEEE Conference on Computer Vision and Pattern Recognition},
  pages={770--778},
  year={2016},
  organization={IEEE}
}

@inproceedings{liu2021swin,
  title={Swin transformer: Hierarchical vision transformer using shifted windows},
  author={Liu, Ze and Lin, Yutong and Cao, Yue and Hu, Han and Wei, Yixuan and Zhang, Zheng and Lin, Stephen and Guo, Baining},
  booktitle={Proceedings of the IEEE/CVF International Conference on Computer Vision},
  pages={10012--10022},
  year={2021},
  organization={IEEE}
}

@inproceedings{simonyan2014deep,
  title={Deep inside convolutional networks: Visualising image classification models and saliency maps},
  author={Simonyan, K and Vedaldi, A and Zisserman, A},
  booktitle={International Conference on Learning Representations},
  year={2014}
}

@inproceedings{smilkov2017smoothgrad,
  title={SmoothGrad: Removing noise by adding noise},
  author={Smilkov, Daniel and Thorat, Nikhil and Kim, Been and Vi{\'e}gas, Fernanda and Wattenberg, Martin},
  booktitle={Proceedings of the 2017 ICML Workshop on Visualization for Deep Learning},
  year={2017}
}

@inproceedings{shrikumar2017learning,
  title={Learning important features through propagating activation differences},
  author={Shrikumar, Avanti and Greenside, Peyton and Kundaje, Anshul},
  booktitle={Proceedings of the 34th International Conference on Machine Learning},
  pages={3145--3153},
  year={2017},
  organization={PMLR}
}

@article{lundberg2017unified,
  title={A unified approach to interpreting model predictions},
  author={Lundberg, Scott M and Lee, Su-In},
  journal={Advances in Neural Information Processing Systems},
  volume={30},
  pages={4768--4777},
  year={2017}
}



\clearpage
\onecolumn
\appendix

\setcounter{secnumdepth}{2}

\renewcommand\thesubsection{\Alph{section}.\arabic{subsection}}
\renewcommand\labelenumi{\arabic{enumi}}

\section{Technical Appendices and Supplementary Material}

\subsection{Proof of Inreasing Utility of Secondary Evidence}\label{apx:proof_swap}
We start with introducing the notations necessary for the proof of Theorem~\ref{thm:swap}.
To facilitate a more tangible analysis, we first restrict the discussion to the case where $S_2$ represents shared features between \textit{only two classes}: $y$ and $y^*$, which gives the premise that:
\begin{equation}\label{eq:shared}
    \bbP{}(y=1,\bsTwo{}=1) + \bbP{}(y^*=1,\bsTwo{}=1) = \bbP{}(\bsTwo{}=1)
\end{equation}
We will discuss how the conclusion generalizes to shared features across more than two classes shortly after completing the proof.
Given a data distribution represented by a balanced dataset with $C$ classes, the mutual information between $S_2$ and $y$ is given by:
\begin{align*}
    &I(\boldsymbol{S_2};y) = H(y) - H(y|\boldsymbol{S_2}) \\
    &H(y) = - \Bigl[\bbP{}(y=1)\cdot\log_C \bbP{}(y=1) + \bbP{}(y=0)\cdot\log_C \bbP{}(y=0)\Bigr] \\
    &\begin{aligned}
        H(y|\bsTwo{}) = - \Bigl[ 
        \bbP{}&(y=1,\bsTwo{}=1)\cdot\log_C\bbP{}(y=1|\bsTwo{}=1) 
        + \bbP{}(y=1,\bsTwo{}=0)\cdot\log_C\bbP{}(y=1|\bsTwo{}=0)  \\
        &+ \bbP{}(y=0,\bsTwo{}=1)\cdot\log_C\bbP{}(y=0|\bsTwo{}=1) 
        + \bbP{}(y=0,\bsTwo{}=0)\cdot\log_C\bbP{}(y=0|\bsTwo{}=0) 
    \Bigr] 
    \end{aligned} \numberthis\label{eq:condEntropy}
\end{align*}
Without introducing \textit{any additional assumption}, we denote the marginal probabilities of $y=1$ and $\bsTwo{}=1$ as:
\begin{align*}
    &\bbP{}(y=1)=\gamma& &\bbP{}(\bsTwo{}=1)=p
\end{align*}
Recalling that $\bsTwo{}$ is shared only between $y$ and $y^*$, we note the co-occurrence of $y=1$ and $\bsTwo{}=1$ as proportional to $\bbP{}(\bsTwo{}=1)$, which yields:
\begin{align*}
    &\bbP{}(y=1,\bsTwo{}=1) = \alpha p &   
    &\bbP{}(y=1,\bsTwo{}=0) = \gamma - \alpha p  \\
    &\bbP{}(y=0,\bsTwo{}=1) = (1 - \alpha) p &   
    &\bbP{}(y=0,\bsTwo{}=0) = 1 - \gamma - p + \alpha p 
\end{align*}
Applying the definition of conditional probability further derives related expressions that relate to the computation of the conditional entropy $H(y|\bsTwo{})$:
\begin{align*}
    &\bbP{}(y=1|\bsTwo{}=1) = \alpha &   
    &\bbP{}(y=1|\bsTwo{}=0) = \frac{\gamma - \alpha p}{1-p}\\
    &\bbP{}(y=0|\bsTwo{}=1) = (1 - \alpha) &   
    &\bbP{}(y=0|\bsTwo{}=0) = \frac{1 - \gamma - p + \alpha p }{1-p}
\end{align*}

\begin{proof}[Proof of Theorem~\ref{thm:swap}]
    To prove:
    \begin{equation*}
        \boxed{\tilde{I}(\bsTwo{};y) > I(\bsTwo{};y) \gg 0 }
    \end{equation*}
    our \textbf{goal is to show} that the conditional entropy of $y$ given $\bsTwo{}$ is reduced after input manipulation:
    \begin{align*}
        &\tilde{H}(y) - \tilde{H}(y|\bsTwo{}) > H(y) - H(y|\bsTwo{}) \\
        \overset{H(y)=\tilde{H}(y)}{\implies}~& \boxed{H(y|\bsTwo{}) - \tilde{H}(y|\bsTwo{}) > 0} \numberthis\label{eq:goal_swap}
    \end{align*}
    The simplification of \eqref{eq:goal_swap} is justified because feature occlusion does not alter the label distribution. 
    Therefore, the first term on each side cancels out. 
    Similar to \eqref{eq:condEntropy}, the second term on the left-hand side of \eqref{eq:goal_swap} expands as:
    \begin{align*}
        \tilde{H}(y|\bsTwo{}) = - \Bigl[ 
            \tilde{\bbP{}}&(y=1,\bsTwo{}=1) \cdot \log_C\tilde{\bbP{}}(y=1|\bsTwo{}=1) 
            + \tilde{\bbP{}}(y=1,\bsTwo{}=0) \cdot \log_C\tilde{\bbP{}}(y=1|\bsTwo{}=0)  \\
            &+ \tilde{\bbP{}}(y=0,\bsTwo{}=1)\cdot\log_C\tilde{\bbP{}}(y=0|\bsTwo{}=1) 
            + \tilde{\bbP{}}(y=0,\bsTwo{}=0)\cdot\log_C\tilde{\bbP{}}(y=0|\bsTwo{}=0) 
        \Bigr] 
    \end{align*} 
    When applying highest-first occlusion, $\tilde{\bbP{}}(y=1,\bsTwo{}=1)$ and $\tilde{\bbP{}}(y=1,\bsTwo{}=0)$ remain unchanged, as $\bsTwo{}$ represents negatively attributed secondary evidence w.r.t. $y$ and thus survives the manipulation:
    \begin{align*}
        & \tilde{\bbP{}}(y=1,\bsTwo{}=1) = \bbP{}(y=1,\bsTwo{}=1) = \alpha p 
        & \tilde{\bbP{}}(y=1,\bsTwo{}=0) = \bbP{}(y=1,\bsTwo{}=0) = \gamma - \alpha p 
    \end{align*}
    The occlusion for class $y^*$, where $\bsTwo{}$ serves as the primary evidence, eliminates the case $(y^*=1, \bsTwo{}=1)$, resulting in:
    \begin{align*}
        \bbP{}(y^*=1, \bsTwo{}=1) = (1 - \alpha)p \overset{\textrm{manipulate}}{\implies} \tilde{\bbP{}}(y^*=1, \bsTwo{}=1) = 0
    \end{align*}
    which affects the distribution and leads to the following updated probabilities: 
    \begin{align*}
        & \tilde{\bbP{}}(\bsTwo{}=1) = \alpha p &
        & \tilde{\bbP{}}(\bsTwo{}=0) = 1 - \alpha p \\
        & \tilde{\bbP{}}(y=0,\bsTwo{}=1) = 0 &
        & \tilde{\bbP{}}(y=0,\bsTwo{}=0) = 1 - \gamma \\
    \end{align*}
    The updated conditional probabilities are now given by:
    \begin{align*}
        & \tilde{\bbP{}} (y=1 | \bsTwo{}=1) = 1 &
        & \tilde{\bbP{}} (y=1 | \bsTwo{}=0) = \frac{\gamma - \alpha p}{1 - \alpha p} \\
        & \tilde{\bbP{}} (y=0 | \bsTwo{}=1) = 0 &
        & \tilde{\bbP{}} (y=0 | \bsTwo{}=0) = \frac{1 - \gamma}{1 - \alpha p} 
    \end{align*}
    So far, both $H(y|\bsTwo{})$ and $\tilde{H}(y|\bsTwo{})$ can be expressed in the form of $\gamma$, $p$, and $\alpha p$, namely:
    \begin{align*}
        H(y|\bsTwo{}) = - \Bigl[ 
            \alpha p &\cdot \log_C \alpha  
            + (\gamma - \alpha p) \cdot \log_C \frac{\gamma - \alpha p}{1-p} \\
            &+ (1 - \alpha) p \cdot \log_C (1-\alpha)
            + (1 - \gamma - p + \alpha p ) \cdot \log_C \frac{1 - \gamma - p + \alpha p }{1-p}
        \Bigr] \numberthis\label{eq:orgCondEntropy}
    \end{align*}
    and:
    \begin{align*}
        \tilde{H}(y|\bsTwo{}) &= - \Bigl[ 
            \alpha p \cdot \log_C 1
            + (\gamma - \alpha p) \cdot \log_C \frac{\gamma - \alpha p}{1-\alpha p} 
            + 0 \cdot \log_C 0
            + (1 - \gamma ) \cdot \log_C \frac{1 - \gamma }{1-\alpha p}
        \Bigr] \\
        &= - \Bigl[ (\gamma - \alpha p) \cdot \log_C \frac{\gamma - \alpha p}{1-\alpha p} + (1 - \gamma ) \cdot \log_C \frac{1 - \gamma }{1-\alpha p} \Bigr] 
        \numberthis\label{eq:altCondEntropy}
    \end{align*}
    Combining \eqref{eq:orgCondEntropy} and \eqref{eq:altCondEntropy} give rise to:
    \begin{alignat*}{2}
        H(y|\bsTwo{}) - \tilde{H}(y|\bsTwo{}) & = - \Bigl[ 
            \alpha p &&\cdot \log_C \alpha  
            + (\gamma - \alpha p) \cdot \log_C \frac{\gamma - \alpha p}{1-p} \\
            &&&+ (1 - \alpha) p \cdot \log_C (1-\alpha)
            + (1 - \gamma - p + \alpha p ) \cdot \log_C \frac{1 - \gamma - p + \alpha p }{1-p}
        \Bigr] \\
        &&&+
        \Bigl[ (\gamma - \alpha p) \cdot \log_C \frac{\gamma - \alpha p}{1-\alpha p} + (1 - \gamma ) \cdot \log_C \frac{1 - \gamma }{1-\alpha p} \Bigr] \\
        & = - \Bigl[ 
            \rlap{$\underbrace{\phantom{\alpha p \cdot \log_C \alpha  + (1 - \alpha) p \cdot \log_C (1-\alpha)}}_{\Circled{1}}$}
            \alpha p &&\cdot \log_C \alpha  + (1 - \alpha) p \cdot \log_C (1-\alpha) \\
            &&& \underbrace{+ (\gamma - \alpha p) \cdot \log_C \frac{\gamma - \alpha p}{1-p}  - (\gamma - \alpha p) \cdot \log_C \frac{\gamma - \alpha p}{1-\alpha p}}_{\Circled{2}} \\
            &&& \underbrace{+ (1 - \gamma - p + \alpha p ) \cdot \log_C \frac{1 - \gamma - p + \alpha p }{1-p} - (1 - \gamma ) \cdot \log_C \frac{1 - \gamma }{1-\alpha p}}_{\Circled{3}}
        \Bigr] \tag{Reorganization}
    \end{alignat*}
    Term $\Circled{2}$ simplifies to:
    \begin{align*}
        \Circled{2} &= (\gamma - \alpha p) \cdot \log_C \frac{1-\alpha p}{1-p} \\
        &= (\gamma - \alpha p) \cdot \underbrace{\log_C (1-\alpha p)}_{\textrm{Pair 1}} - \underbrace{(\gamma - \alpha p) \cdot \log_C (1-p)}_{\textrm{Pair 2}}
    \end{align*}
    Term $\Circled{3}$ can be reorganized as:
    \begin{alignat*}{2}
        \Circled{3} &= (1 &&- \gamma - p + \alpha p ) \cdot \log_C (1 - \gamma - p + \alpha p) 
        - \Bigl[(1 - p) - \rlap{$\underbrace{\phantom{(\gamma - \alpha p )\Bigr] \cdot \log_C (1-p) }}_{\textrm{Pair 2}}$}
        (\gamma - \alpha p )\Bigr] \cdot \log_C (1-p) \\
        & &&- (1 - \gamma ) \cdot \log_C (1 - \gamma ) + (1 - \gamma ) \cdot \underbrace{\log_C (1-\alpha p)}_{\textrm{Pair 1}}
    \end{alignat*}
    Term $\Circled{2}$ and $\Circled{3}$ share common components.
    Collecting all terms and aligning the shared parts gives:
    \begin{alignat*}{2}
        \Circled{2} + \Circled{3} &= (1 &&- \gamma - p + \alpha p ) \cdot \log_C (1 - \gamma - p + \alpha p) - (1-\gamma) \cdot \log_C (1-\gamma)\\
        & && - \Bigl[(1 - p) - \rlap{$\underbrace{\phantom{(\gamma - \alpha p )\Big[ \cdot \log_C (1-p) }}_{\textrm{Pair 2 in }\Circled{3}}$}
        (\gamma - \alpha p )\Bigr] \cdot \log_C (1-p) 
        - \rlap{$\underbrace{\phantom{(\gamma - \alpha p) \Big] \log_C (1-p)}}_{\textrm{Pair 2 in }\Circled{2}}$} (\gamma - \alpha p) \cdot \log_C (1-p)\\
        & && + (1-\gamma) \cdot \underbrace{\log_C (1-\alpha p)}_{\textrm{Pair 1 in }\Circled{3}} + (\gamma - \alpha p) \cdot \underbrace{\log_C (1-\alpha p)}_{\textrm{Pair 1 in }\Circled{2}}    \\
        &= \Bigl[(1 &&- \gamma - p + \alpha p ) \cdot \log_C (1 - \gamma - p + \alpha p) - 1-\gamma) \cdot \log_C (1-\gamma)\Bigr]\\
        & && - \Bigl[(1 - p) \cdot \log_C (1-p) + (1 - \alpha p) \cdot \log_C (1 - \alpha p)\Bigr]\\
        &= \Bigl[(1 &&- \gamma - p + \alpha p ) \cdot \log_C (1 - \gamma - p + \alpha p) - (1-\gamma) \cdot \log_C (1-\gamma)\Bigr]\\
        & && - \Bigl[(1 - p ) \cdot \log_C (1-p ) + (1 - \alpha p \underbrace{+p-p}_{\textrm{Zero term}}) \cdot \log_C (1 - \alpha p \underbrace{+ p - p}_{\textrm{Zero term}}) \Bigr] \numberthis\label{eq:deriveDiff}
    \end{alignat*}
    Let $\delta=p-\alpha p$ be a shorthand for notational convenience. We rewrite \eqref{eq:deriveDiff} as:
    \begin{alignat*}{2}
        \Circled{2} + \Circled{3} &= \Bigl[(1 &&- \gamma - \delta ) \cdot \log_C (1 - \gamma - \delta) - (1-\gamma) \cdot \log_C (1-\gamma)\Bigr]\\
        & && + \Bigl[(1 - p + \delta) \cdot \log_C (1 - p + \delta) - (1 - p ) \cdot \log_C (1-p) \Bigr] \\
        &\approx - \delta &&\cdot \log_C (1-\gamma) + \delta \cdot \log_C (1 - p) \tag{First-order Taylor Expansion} \\
        &= ~\delta ~\cdot && \log_C \frac{1-p}{1-\gamma} 
    \end{alignat*}
    Given $p$ corresponding to the probability of shared features that span over more than one class, we have $p>\gamma$, which derives:
    \begin{align*}
        \Circled{2} + \Circled{3} = \delta \cdot \log_C \frac{1-p}{1-\gamma} < 0
    \end{align*}
    Finally, by collecting all terms --- $\Circled{1}$, $\Circled{2}$, and $\Circled{3}$ --- we conclude the proof:
    \begin{align*}
        H(y|\bsTwo{}) - \tilde{H}(y|\bsTwo{}) &= - \Bigl[ \Circled{1} + \Circled{2} + \Circled{3} \Bigr] \\
        &= - \Bigl[ \underbrace{\alpha p \cdot \log_C \alpha}_{<0}  + \underbrace{(1 - \alpha) p \cdot \log_C (1-\alpha) }_{<0} + \underbrace{\delta \cdot \log_C \frac{1-p}{1-\gamma}}_{<0} \Bigr]  \\
        &> 0 \\
        \implies \tilde{I}(\bsTwo{};y) &> I(\bsTwo{};y)
    \end{align*}
\end{proof}

The above proof focuses on the case where the shared features are associated with only two classes. 
However, the same derivation can be readily generalized to more complicated settings involving features shared across multiple classes.
Specifically, the relevant classes can be split into two groups based on the attribution sign of the shared features.
The derivation can then be applied to the subset of classes for which the target features serve as secondary evidence, demonstrating the increase in utility of these features for that specific group. 

\subsection{Proof of Negative Attribution by Weak Positive Contributor}\label{apx:proof_wpc}
\begin{proof}[Proof of Theorem~\ref{thm:wpc}]
    For a classification task with $C$ classes, let $y$ be the target class, to which the \textit{i}-th feature is a weak positive contributor. 
    The relationship between raw model outcomes (by the classification head) and model confidence (by the softmax layer) is:
    \begin{equation*}
        f_y(\boldsymbol{N})=\sigma\Bigl(o_y(\boldsymbol{N})\Bigr)
        =\frac{\exp\Bigl(
            o_y(\boldsymbol{N})
        \Bigr)}
        {\sum_{k=1}^{C}\exp\Bigl(
            o_k(\boldsymbol{N})
        \Bigr)}
    \end{equation*}
    Similarly, the relationship conditioned by the absence of the \textit{i}-th feature is:
    \begin{equation*}
        f_y(\boldsymbol{N}\backslash \{i\})=\sigma\Bigl(o_y(\boldsymbol{N}\backslash \{i\})\Bigr)=
        \frac{\exp\Bigl( 
            o_y(\boldsymbol{N}\backslash \{i\}) 
        \Bigr)}
        {\sum_{k=1}^{C}\exp\Bigl( 
            o_k(\boldsymbol{N}\backslash \{i\}) 
        \Bigr)}
    \end{equation*}
    Please note that defining feature absence $\boldsymbol{N}\backslash \{i\}$ is a complicated topic in the context of machine learning.
    As machine learning models do not process incomplete inputs, current solutions usually consider some constant default values or represent the absence state by sampling substitutes from some carefully designed distributions.
    To better focus on the discussion of the sign issue, here we assume that the perfect absence state is available, which facilitates the derivation of the underlying feature contributions:
    \begin{equation}\label{eq:add_contribution}
        o_y(\boldsymbol{N})=o_y(\boldsymbol{N}\backslash \{i\}) + \xi_i^{o_y}
    \end{equation}
    For a feature to receive a negative attribution score w.r.t. to model confidence $f_y(\cdot)$, it requires:
    \begin{align*}
        f_y(\boldsymbol{N})&<f_y(\boldsymbol{N}\backslash \{i\})\\
        \implies~\frac{\exp\Bigl(
            o_y(\boldsymbol{N})
        \Bigr)}
        {\sum_{k=1}^{C}\exp\Bigl(
            o_k(\boldsymbol{N})
        \Bigr)} &< \frac{\exp\Bigl(
            o_y(\boldsymbol{N}\backslash \{i\})
        \Bigr)}
        {\sum_{k=1}^{C}\exp\Bigl(
            o_k(\boldsymbol{N}\backslash \{i\})
        \Bigr)}
    \end{align*}
    Applying Eq.~\ref{eq:add_contribution} and reorganizing the elements derives:
    \begin{equation*}
        \exp( \xi_i^{o_y} ) 
        < \frac{\sum_{k=1}^{C}\exp\Bigl(
            o_k(\boldsymbol{N}\backslash \{i\}) + \xi_i^{o_c} 
        \Bigr)}{\sum_{k=1}^{C}\exp\Bigl(o_k(\boldsymbol{N}\backslash \{i\})\Bigr)}
        = \frac{\exp\Bigl(
            o_y(\boldsymbol{N}\backslash \{i\}) + \xi_i^{o_y}
        \Bigr) + 
        \sum_{y^*\neq y}\exp\Bigl(
            o_{y^*}(\boldsymbol{N}\backslash \{i\}) + \xi_i^{o_{y^*}}
        \Bigr)}
        {\exp\Bigl(
            o_y(\boldsymbol{N}\backslash \{i\})
        \Bigr) + 
        \sum_{y^* \neq y}\exp\Bigl(
            o_{y^*}(\boldsymbol{N}\backslash \{i\})
        \Bigr)}
    \end{equation*}
    We replace the class label subscript $k$ with $y^*$ on the right-hand side to emphasize a semantic distinction. 
    The symbol $k$ denotes an enumeration over \textit{all} classes, whereas $y^*$ specifically refers to \textit{classes} other than the target class $y$, namely $y^*\neq y$.
    Multiplying both sides by the denominator further simplifies the above inequality to:
    \begin{align*}
        \exp( \xi_i^{o_y} ) \cdot
        \Biggl[
            \exp\Bigl(
                o_y(\boldsymbol{N}\backslash \{i\})
            \Bigr) + 
            \sum_{y^* \neq y}\exp\Bigl(
                o_{y^*}(\boldsymbol{N}\backslash \{i\})
            \Bigr)
        \Biggr]
        &< \exp\Bigl(
            o_{y}(\boldsymbol{N}\backslash \{i\}) + \xi_i^{o_y}
        \Bigr) + \sum_{y^*\neq y}\exp\Bigl(
            o_{y^*}(\boldsymbol{N}\backslash \{i\}) + \xi_i^{o_{y^*}} 
        \Bigr) \\
        {\implies}
        \exp( \xi_i^{o_y} ) \cdot
        \sum_{y^* \neq y}\exp\Bigl( o_{y^*}(\boldsymbol{N}\backslash \{i\}) \Bigr)
        &< \sum_{y^*\neq y}\exp\Bigl(
            o_{y^*}(\boldsymbol{N}\backslash \{i\}) + \xi_i^{o_{y^*}} 
        \Bigr) \\
        \implies 
        \exp( \xi_i^{o_y}) 
        &< \frac{\sum_{y^*\neq y}\exp\Bigl(
            o_{y^*}(\boldsymbol{N}\backslash \{i\})
        \Bigr) 
        \cdot \exp( \xi_i^{o_{y^*}} )} 
        {\sum_{y^*\neq y}\exp\Bigl(
            o_{y^*}(\boldsymbol{N}\backslash \{i\})
        \Bigr)}
    \end{align*}
    The second line in the above derivation follows from canceling out the identical first term on both sides.
    Now we rewrite the right-hand side by dividing both the numerator and the denominator by $\sum_{k=1}^C \exp\Bigl(o_k(\boldsymbol{N}\backslash \{i\})\Bigr)$, yielding:
    \begin{equation*}
        \exp( \xi_i^{o_y} ) 
        < \sum_{y^*\neq y} \frac{ f_{y^*}(\boldsymbol{N}\backslash \{i\})} 
        {1 - f_y(\boldsymbol{N}\backslash \{i\})}
        \cdot \exp( \xi_i^{o_{y^*}} )
    \end{equation*}
    This concludes the proof by interpreting $\frac{ f_{y^*}(\boldsymbol{N}\backslash \{i\})}{1 - f_y(\boldsymbol{N}\backslash \{i\})}$ as the probability of class $y^*$ conditioned on $y^*\neq y$, namely:
    \begin{equation*}
        \exp( \xi_i^{o_y} ) 
        < \mathbb{E}_{y^*\neq y}\Bigl[\exp( \xi_i^{o_{y^*}} )\Bigr]
    \end{equation*}
    
\end{proof}

\subsection{Extended Discussion on Related Work}\label{apx:extended_rw}
\citet{hooker2019benchmark} first mentioned \kear{} (referred to as \textsc{kar} for \underline{k}eep \underline{a}nd \underline{r}etrain) in the appendix of their paper as an alternative to \roar{} that differs in occlusion priority.
The results concluded that \kear{} is  ``a poor discriminator between estimators''.
However, we argue that this conclusion is misleading, as it results from mixing evaluation outcomes across different attribution scales. 
Specifically, \citet{hooker2019benchmark} included two variants of IG as competitors --- SQ-IG and Var-IG --- both of which convert signed attribution scores to magnitudes. 
Consequently, the \roar{} assessments for these unsigned attributions are equivalent to \roarAbs{}, which practically employs a relevant-first occlusion rather than the expected highest-first strategy.

As detailed in Section~\ref{sec:kaft}, relevant-first occlusion mitigates the sign issue and is therefore less susceptible to residual information, which distorts retraining-based evaluations.
As a result, comparing \roar{} results on \textit{signed} attributions (where residual information remains) with the results on \textit{unsigned} attributions (where residuals are mitigated) unintentionally inflates its ability to distinguish explainer performances.
This inflation perceptually undermines the significance of \kear{}'s results.
In contrast, our results in Figure~\ref{fig:smallScal90p} and Table~\ref{apx:tab:small_stats_kaft}, derived from consistent experimental settings, demonstrate that \kear{} is an effective evaluation scheme: it not only discriminates well among tested explainers but also mitigates evaluation distortions, aligning with theoretical expectations.


\subsection{Detailed Experimental Settings}\label{apx:exp_details}
\subsubsection{Abbreviations and Naming Rules of Evaluation Scheme}\label{apx:abbrvs}
Table~\ref{tab:abbrvs} summarizes the abbreviations used for different evaluation schemes under the retraining framework.
An evaluation scheme abbreviation used in this work is composed of three components:
\begin{enumerate}
    \item \textbf{Prefix}\footnote{Please note that, for the two standard retraining settings, the prefix is \textsc{ro}- and \textsc{ke}- in \roar{} and \kear{} to be consistent with the naming policy in \citep{hooker2019benchmark}.}: An abbreviation is started by either \textsc{r}- or \textsc{k}-, depending on the ordering of feature removal, where \textsc{r}- corresponds to highest-first and \textsc{k}- corresponds to lowest-first removal. 
    \item \textbf{Postfix}: The ending of an abbreviation states the exact retraining strategy. 
    The choices include retraining (-\textsc{r}), fine-tuning (-\textsc{ft}), and fine-tuning on the classification head (-\textsc{ft-c}).
    \item \textbf{The Magnitude-only Flag}: The absolute value operator $\vert\cdot\vert$ indicates a manipulation process driven by attribution magnitudes. 
    The combination of magnitude-based occlusion with restrictive fine-tuning on classification head offers an alternative to resolve the issue of residual information
\end{enumerate}

\begin{table}
    \centering
    \caption{Evaluation strategy abbreviations and corresponding schemes}\label{tab:abbrvs}
    \begin{tblr}{c|c}
        \toprule
        Abbrv. & Scheme \\
        \midrule
        \roar{}/\kear{} & Remove/Keep and Retraining \\
        \raft{}/\kaft{} & Remove/Keep and Fine-tuning  \\
        \raftc{}/\kaftc{} & Remove/Keep and Fine-tuning on Classification Head \\
        \raftcAbs{}/\kaftcAbs{} & F.-t. on Classification Head following attribution magnitude \\
        \bottomrule
    \end{tblr}
\end{table}

\subsubsection{Experimental Environment}
The competitors, including the proposed method, were implemented using Python 3.11.2 with standard packages.
The primary packages were Numpy 1.24.2, PyTorch of version 2.6.0, and Torchvision 0.21.0. 
The CUDA version was 12.5 for GPU support. 
All experiments were conducted on a machine operated by Debian 11 with the following specifications:
\begin{itemize}
    \item Processor: Intel i7-14700K, 20 cores
    \item Memory: 64GB DDR4
    \item GPU: NVIDIA RTX 5000 ADA, 32GB
\end{itemize}

\subsubsection{Feature Attribution Methods}
During the experiments, we focused on gradient-based feature attribution methods for their efficiency and generalizability.
The choices, including the extension in the large-scale settings, are:
\begin{itemize}
    \item Vanilla Gradient (\textbf{VG}) interprets model sensitivity to input features (as reflected by raw gradients) directly as explanations to decisions:
    \[ \xi_i = \frac{\partial f_y(\boldsymbol{x})}{\partial x_i} \]
    \item \textsc{SmoothGrad} (\textbf{SG}) improves over vanilla gradient by smoothing gradients through an average of \textit{k} Gaussian-noised copies of the input:
    \[ \xi_i = \frac{1}{k} \sum_{\boldsymbol{\epsilon}\sim\mathcal{N}} \frac{\partial f_y(\boldsymbol{x})}{\partial x_i}\bigg\rvert_{\boldsymbol{x}=\boldsymbol{x}+\boldsymbol{\epsilon}} \]
    \item Integrated Gradients (\textbf{IG}) incorporates a baseline $\boldsymbol{\mathring{x}}$ that models feature absence during the explanation procedure and integrates gradients over an interpolated path between the absence and presence states:
    \[ \xi_i = \frac{(\boldsymbol{x} - \boldsymbol{\mathring{x}})}{k} \sum_{i=0}^{k} \frac{\partial f_y(\boldsymbol{x})}{\partial x_i}\bigg\rvert_{\boldsymbol{\mathring{x}} + \frac{i}{k}(\boldsymbol{x} - \boldsymbol{\mathring{x}})} \]
    \item Gradient$\odot$Input (\textbf{G$\odot$I}) improves the sharpness of attribution maps by multiplying raw gradients with the corresponding input. G$\odot$I can be viewed as a special case of IG, where only one observation is collected:
    \[ \xi_i = x_i \cdot \frac{\partial f_y(\boldsymbol{x})}{\partial x_i} \]
    \item Expected Gradients (\textbf{EG}) is a variant of IG, modeling the feature absence with a baseline distribution $\mathcal{B}$ rather than constant values:
    \[ \xi_i = \mathbb{E}_{\boldsymbol{\mathring{x}}\sim\mathcal{B}}\bigg[\frac{(\boldsymbol{x} - \boldsymbol{\mathring{x}})}{k} \sum_{i=0}^{k} \frac{\partial f_y(\boldsymbol{x})}{\partial x_i}\bigg\rvert_{\boldsymbol{\mathring{x}} + \frac{i}{k}(\boldsymbol{x} - \boldsymbol{\mathring{x}})} \bigg] \]
    \item Smoothed Integrated Gradients (\textbf{SIG}) is a common ensemble that combines the theoretical strength of IG with the denoising effect provided by SG:
    \[ \xi_i = \frac{(\boldsymbol{x} - \boldsymbol{\mathring{x}})}{k} \sum_{i=0, \boldsymbol{\epsilon}\sim\mathcal{N}}^{k} \frac{\partial f_y(\boldsymbol{x})}{\partial x_i}\bigg\rvert_{\boldsymbol{\mathring{x}} + \frac{i}{k}(\boldsymbol{x} - \boldsymbol{\mathring{x}}) + \boldsymbol{\epsilon}} \]
\end{itemize}

\subsubsection{Classifiers and Training Settings}
A CNN composed of two convolutional layers concatenated by two dense layers is trained for MNIST;
the variant WideResNet 40-4 is trained on CIFAR10 from scratch;
the variant EfficientNet-B0\footnote{https://docs.pytorch.org/vision/main/models/efficientnet.html} is trained on STL10 from scratch.
Table~\ref{apx:tab:train} summarizes the training configurations for the small-scale settings.
Since the scope of this paper is on explanation evaluation, the training hyperparameters are configured manually to achieve a moderate performance, and there could be space for further improvements of model performance.
Regarding the large-scale tests, the pretrained versions of ResNet50\footnote{https://docs.pytorch.org/vision/main/models/resnet.html} and SwinT\footnote{https://docs.pytorch.org/vision/main/models/swin\_transformer.html} are used for the experiments on ImageNet1k without any further training.
Table~\ref{apx:tab:acc} lists the performances of the tested classifiers on the corresponding dataset.

\begin{table}
  \centering
  \caption{Training settings for small-scale tests}\label{apx:tab:train}
  \begin{tblr}{c|ccccc}
    \toprule
    Model & Epochs & Optim. & LR & Sched. & Augment. \\
    \midrule
    CNN & 30 & SGD & $0.01$ & - & - \\
    WideResNet 40-4 & 80 & SGD & $0.1$ & MultiStepLR & RC$^{a}$, HF$^{b}$\\
    EfficientNet-B0 & 300 & Adam & $2\times 10^{-3}$ & CosineLR & RC, HF \\[0.5mm]
    \bottomrule
    \SetCell[c=3]{l}{$^{a}$Random Crop} & & & \SetCell[c=3]{l}{$^{b}$Horizontal Flip} &&
  \end{tblr}
\end{table}

\begin{table}
  \centering
  \caption{Model performance on corresponding dataset}\label{apx:tab:acc}
  \begin{tblr}{c|ccc|cc}
    \toprule
    Dataset & MNIST & CIFAR10 & STL10 & \SetCell[c=2]{c}{ImageNet1k} \\
    Model & CNN & WideResNet & EfficientNet & ResNet50 & SwinT \\[0.5mm]
    \midrule
    Acc. ($\%$) & 99.04 & 94.43 & 80.91 & 76.13$^{c}$ & 81.47$^{c}$ \\[0.5mm]
    \bottomrule
    \SetCell[c=6]{l}{$^{c}$Top-1 accuracy reported by the model provider.} 
  \end{tblr}
\end{table}

During retraining, the exact training setting is used for each test case.
When evaluated by \kaft{}, the number of epochs is reduced to 30 for fine-tuning with one warmup epoch; a cosine scheduler is applied to update the learning rate, and the rest of the settings remain unchanged.
For fine-tuning on the classification head, the number of epochs is further reduced to 10. 

\subsection{Additional Experimental Results}\label{apx:additional_results}

\subsubsection{Detailed Experimental Results}

\begin{table}[t]
    \centering
    \caption{Evaluation results by \roar{}\flatdownarrow{} reported on small-scale settings}\label{apx:tab:small_stats_roar}
    \begin{tblr}{c|c|ccc}
        \toprule
        \roar{}~\flatdownarrow & Random & VG & SG & IG \\
        \midrule
        MNIST-CNN & 78.62 ± 0.29 & \textbf{88.93 ± 0.26} & 89.30 ± 0.64 & 97.02 ± 0.13 \\
        CIFAR10-WideResNet & 63.34 ± 1.79 & \textbf{74.05 ± 0.72} & 84.16 ± 0.72 & 79.54 ± 1.58 \\
        STL10-EfficientNet & 60.19 ± 0.83 & 62.16 ± 0.93 & 65.85 ± 0.25 & \textbf{60.66 ± 0.24} \\
        \bottomrule
        \SetCell[c=5]{l}{\flatdownarrow: Lower is better. }
    \end{tblr}
\end{table}

\begin{table}[t]
    \centering
    \caption{Evaluation results by \kaft{}\flatuparrow{} variants reported on small-scale settings}\label{apx:tab:small_stats_kaft}
    \begin{tblr}{c|c|ccc}
        \toprule
        \kear{}~\flatuparrow & Random & VG & SG & IG \\
        \midrule
        MNIST-CNN & 78.62 ± 0.29 & 94.38 ± 0.14 & 97.34 ± 0.17 & \textbf{98.99 ± 0.04} \\
        CIFAR10-WideResNet & 63.34 ± 1.79 & 72.58 ± 0.69 & 83.45 ± 0.59 & \textbf{83.86 ± 1.10} \\
        STL10-EfficientNet & 60.19 ± 0.83 & 60.42 ± 0.63 & 62.71 ± 0.26 & \textbf{71.99 ± 0.21} \\
        \bottomrule
        \SetCell[c=5]{c} \\[-50pt]
        \toprule
        \kaft{}~\flatuparrow & Random & VG & SG & IG \\
        \midrule
        MNIST-CNN & 55.95 ± 2.23 & 87.55 ± 0.64 & 94.77 ± 0.09 & \textbf{98.95 ± 0.06} \\
        CIFAR10-WideResNet & 42.17 ± 1.33 & 46.35 ± 2.32 & 62.52 ± 1.49 & \textbf{69.99 ± 2.60} \\
        STL10-EfficientNet & 33.86 ± 1.19 & 36.26 ± 1.71 & 37.01 ± 2.32 & \textbf{42.57 ± 5.08} \\
        \bottomrule
        \SetCell[c=5]{c} \\[-50pt]
        \toprule
        \kaftc{}~\flatuparrow & Random & VG & SG & IG \\
        \midrule
        MNIST-CNN & 32.87 ± 0.24 & 59.15 ± 0.21 & 74.22 ± 0.19 & \textbf{96.47 ± 0.03} \\
        CIFAR10-WideResNet & 19.12 ± 0.38 & 19.81 ± 0.56 & 23.89 ± 0.17 & \textbf{31.57 ± 0.05} \\
        STL10-EfficientNet & 15.30 ± 0.55 & 19.23 ± 0.88 & 19.59 ± 0.70 & \textbf{36.26 ± 1.13} \\
        \bottomrule
        \SetCell[c=5]{l}{\flatuparrow: Higher is better. }
    \end{tblr}
\end{table}

\begin{table}[tbp]
    \centering
    \caption{Evaluation results by \raftcAbs{}\flatdownarrow{} reported on small-scale settings}\label{apx:tab:small_stats_raft}
    \begin{tblr}{c|c|ccc}
        \toprule
        \raftcAbs{}~\flatdownarrow & Random & VG & SG & IG \\
        \midrule
        MNIST-CNN & 32.87 ± 0.24 & 22.07 ± 0.37 & 23.05 ± 0.51 & \textbf{11.35 ± 0.00} \\
        CIFAR10-WideResNet & 19.12 ± 0.38 & 20.04 ± 0.35 & 20.30 ± 0.20 & \textbf{19.65 ± 0.34} \\
        STL10-EfficientNet & 15.30 ± 0.55 & 13.20 ± 0.76 & 14.50 ± 0.50 & \textbf{13.16 ± 0.47} \\
        \bottomrule
    \end{tblr}
\end{table}

Tables~\ref{apx:tab:small_stats_roar},~\ref{apx:tab:small_stats_kaft}, and~\ref{apx:tab:small_stats_raft} present the detailed evaluation results obtained following different evaluation schemes under the three \textit{small-scale settings}.
Each reported value represents the mean accuracy over 5 retraining repetitions for the reliability of the results, followed by the standard deviation. 
Notable, the performance of the random baseline is identical for \roar{} (Table~\ref{apx:tab:small_stats_roar}) and \kear{} (the first block in Table~\ref{apx:tab:small_stats_kaft}), as the two schemes differ only in occlusion priority --- a factor to which random removal is insensitive.
The same holds for the random baseline of \kaftc{} (the third block in Table~\ref{apx:tab:small_stats_kaft}) and \raftcAbs{} (Table~\ref{apx:tab:small_stats_raft}).

Tables~\ref{apx:tab_kaft_resnet},~\ref{apx:tab_raft_resnet},~\ref{apx:tab_kaft_swint}, and~\ref{apx:tab_raft_swint} report the evaluation statistics of \kaftc{} and \raftcAbs{} on ImageNet-ResNet50 and ImageNet-SwinT, complementing the visual results shown in Figure~\ref{fig:ratioLarge}.

\begin{table}
    \centering
    \caption{Evaluation results by \kaftc{}\flatuparrow{} on \textbf{ImageNet-ResNet50} under various removal ratios (ranging from $10\%$ to $90\%$)}\label{apx:tab_kaft_resnet}
    \begin{tblr}{width=0.95\textwidth, colspec={c|X[1,c]|X[1,c]X[1,c]X[1,c]X[1,c]}}
        \toprule
        Edit Ratio & Original & $10\%$ & $20\%$ & $30\%$ & $40\%$ \\
        \midrule
        Random & \SetCell[r=7]{c} 76.13 & 60.92 ± 0.08 & 54.46 ± 0.19 & 48.96 ± 0.19 & 43.27 ± 0.17 \\
        VG & & 53.80 ± 0.11 & 46.36 ± 0.20 & 40.69 ± 0.18 & 36.37 ± 0.19 \\
        SG & & 50.53 ± 0.20 & 42.92 ± 0.16 & 37.62 ± 0.21 & 33.99 ± 0.07 \\
        G$\odot$I && 60.75 ± 0.13 & 54.67 ± 0.08 & 49.65 ± 0.15 & 46.20 ± 0.09 \\
        IG && 63.22 ± 0.10 & 59.14 ± 0.07 & 55.15 ± 0.22 & 51.89 ± 0.33 \\
        EG && 63.02 ± 0.16 & 58.13 ± 0.07 & 54.13 ± 0.19 & 51.13 ± 0.02 \\
        SIG && 65.92 ± 0.05 & 62.67 ± 0.13 & 59.78 ± 0.13 & 57.55 ± 0.26 \\
        \bottomrule
        \SetCell[c=5]{c} \\[-50pt]
    \end{tblr}

    \begin{tblr}{width=0.95\textwidth, colspec={c|X[1,c]X[1,c]X[1,c]X[1,c]X[1,c]}}
        \toprule
        Edit Ratio & $50\%$ & $60\%$ & $70\%$ & $80\%$ & $90\%$ \\
        \midrule
        Random & 38.11 ± 0.14 & 32.69 ± 0.20 & 26.82 ± 0.17 & 20.24 ± 0.05 & 12.98 ± 0.08 \\
        VG & 32.43 ± 0.24 & 28.07 ± 0.16 & 24.17 ± 0.29 & 19.59 ± 0.05 & 13.06 ± 0.17 \\
        SG & 31.85 ± 0.18 & 29.76 ± 0.16 & 27.46 ± 0.01 & 24.15 ± 0.03 & 18.08 ± 0.25 \\
        G$\odot$I & 43.40 ± 0.09 & 40.15 ± 0.10 & 33.96 ± 0.09 & 26.02 ± 0.03 & 15.60 ± 0.15 \\
        IG & 50.13 ± 0.13 & 47.65 ± 0.09 & 43.70 ± 0.19 & 37.09 ± 0.10 & 24.74 ± 0.18 \\
        EG & 49.02 ± 0.24 & 45.93 ± 0.09 & 41.76 ± 0.20 & 35.02 ± 0.14 & 22.10 ± 0.04 \\
        SIG & 56.09 ± 0.04 & 55.09 ± 0.13 & 52.76 ± 0.15 & 47.86 ± 0.07 & 35.61 ± 0.18 \\
        \bottomrule
    \end{tblr}
\end{table}

\begin{table}
    \centering
    \caption{Evaluation results by \raftcAbs{}\flatdownarrow{} on \textbf{ImageNet-ResNet50} under various removal ratios (ranging from $10\%$ to $90\%$)}\label{apx:tab_raft_resnet}
    \begin{tblr}{width=0.95\textwidth, colspec={c|X[1,c]|X[1,c]X[1,c]X[1,c]X[1,c]}}
        \toprule
        Edit Ratio & Original & $10\%$ & $20\%$ & $30\%$ & $40\%$ \\
        \midrule
        Random & \SetCell[r=7]{c} 76.13 & 60.92 ± 0.08 & 54.46 ± 0.19 & 48.96 ± 0.19 & 43.27 ± 0.17 \\
        VG && 55.24 ± 0.08 & 45.94 ± 0.11 & 38.93 ± 0.21 & 32.74 ± 0.25 \\
        SG && 48.49 ± 0.05 & 36.57 ± 0.01 & 28.70 ± 0.14 & 22.64 ± 0.15 \\
        G$\odot$I && 53.51 ± 0.04 & 43.88 ± 0.04 & 35.86 ± 0.10 & 29.21 ± 0.06 \\
        IG && 50.65 ± 0.16 & 40.04 ± 0.16 & 32.34 ± 0.14 & 25.72 ± 0.13 \\
        EG && 51.30 ± 0.23 & 40.07 ± 0.03 & 31.53 ± 0.19 & 25.17 ± 0.16 \\
        SIG && 46.86 ± 0.07 & 35.35 ± 0.03 & 28.03 ± 0.23 & 22.26 ± 0.12 \\
        \bottomrule
        \SetCell[c=5]{c} \\[-50pt]
    \end{tblr}

    \begin{tblr}{width=0.95\textwidth, colspec={c|X[1,c]X[1,c]X[1,c]X[1,c]X[1,c]}}
        \toprule
        Edit Ratio & $50\%$ & $60\%$ & $70\%$ & $80\%$ & $90\%$ \\
        \midrule
        Random & 38.11 ± 0.14 & 32.69 ± 0.20 & 26.82 ± 0.17 & 20.24 ± 0.05 & 12.98 ± 0.08 \\
        VG & 27.70 ± 0.09 & 22.93 ± 0.15 & 18.73 ± 0.10 & 14.48 ± 0.16 & 9.41 ± 0.08 \\
        SG & 18.51 ± 0.18 & 15.07 ± 0.12 & 12.88 ± 0.12 & 10.13 ± 0.19 & 7.60 ± 0.19 \\
        G$\odot$I & 24.17 ± 0.16 & 19.56 ± 0.06 & 15.39 ± 0.09 & 11.23 ± 0.11 & 7.21 ± 0.04 \\
        IG & 20.94 ± 0.13 & 17.27 ± 0.21 & 13.48 ± 0.31 & 10.36 ± 0.02 & 6.92 ± 0.08 \\
        EG & 20.08 ± 0.18 & 16.11 ± 0.14 & 12.82 ± 0.17 & 9.37 ± 0.12 & 6.52 ± 0.04 \\
        SIG & 18.15 ± 0.19 & 14.72 ± 0.06 & 11.79 ± 0.11 & 9.02 ± 0.09 & 6.42 ± 0.14 \\
        \bottomrule
    \end{tblr}
\end{table}

\begin{table}
    \centering
    \caption{Evaluation results by \kaftc{}\flatuparrow{} on \textbf{ImageNet-SwinT} under various removal ratios (ranging from $10\%$ to $90\%$)}\label{apx:tab_kaft_swint}
    \begin{tblr}{width=0.95\textwidth, colspec={c|X[1,c]|X[1,c]X[1,c]X[1,c]X[1,c]}}
        \toprule
        Edit Ratio & Original & $10\%$ & $20\%$ & $30\%$ & $40\%$ \\
        \midrule
        Random & \SetCell[r=7]{c} 81.47 & 72.34 ± 0.16 & 64.83 ± 0.10 & 58.06 ± 0.12 & 51.62 ± 0.09 \\
        VG && 67.78 ± 0.08 & 61.20 ± 0.18 & 55.09 ± 0.11 & 49.65 ± 0.14 \\
        SG && 60.08 ± 0.27 & 51.99 ± 0.14 & 47.18 ± 0.19 & 43.77 ± 0.08 \\ 
        G$\odot$I && 72.79 ± 0.06 & 67.44 ± 0.11 & 61.98 ± 0.05 & 55.90 ± 0.12 \\
        IG && 73.99 ± 0.17 & 68.93 ± 0.09 & 63.20 ± 0.08 & 57.39 ± 0.13 \\
        EG && 75.23 ± 0.05 & 72.84 ± 0.06 & 70.38 ± 0.13 & 67.75 ± 0.08 \\
        SIG && 75.80 ± 0.04 & 73.63 ± 0.12 & 71.34 ± 0.03 & 69.33 ± 0.13 \\
        \bottomrule
        \SetCell[c=5]{c} \\[-50pt]
    \end{tblr}

    \begin{tblr}{width=0.95\textwidth, colspec={c|X[1,c]X[1,c]X[1,c]X[1,c]X[1,c]}}
        \toprule
        Edit Ratio & $50\%$ & $60\%$ & $70\%$ & $80\%$ & $90\%$ \\
        \midrule
        Random & 45.39 ± 0.12 & 39.76 ± 0.16 & 31.76 ± 0.03 & 19.85 ± 0.11 & 8.01 ± 0.13 \\
        VG & 43.26 ± 0.08 & 36.07 ± 0.13 & 26.47 ± 0.05 & 16.34 ± 0.11 & 7.35 ± 0.10 \\
        SG & 42.01 ± 0.14 & 40.69 ± 0.14 & 39.01 ± 0.11 & 35.88 ± 0.13 & 27.32 ± 0.09 \\
        G$\odot$I & 50.33 ± 0.04 & 43.35 ± 0.05 & 34.37 ± 0.10 & 22.15 ± 0.15 & 9.17 ± 0.21\\
        IG & 52.03 ± 0.09 & 45.99 ± 0.20 & 36.76 ± 0.13 & 23.97 ± 0.02 & 9.92 ± 0.10 \\
        EG & 65.30 ± 0.13 & 61.84 ± 0.20 & 54.54 ± 0.26 & 40.71 ± 0.12 & 19.38 ± 0.02 \\
        SIG & 68.54 ± 0.05 & 67.55 ± 0.03 & 65.43 ± 0.07 & 60.12 ± 0.05 & 42.92 ± 0.08 \\
        \bottomrule
    \end{tblr}
\end{table}

\begin{table}
    \centering
    \caption{Evaluation results by \raftcAbs{}~\flatdownarrow{} on \textbf{ImageNet-SwinT} under various removal ratios (ranging from $10\%$ to $90\%$)}\label{apx:tab_raft_swint}
    \begin{tblr}{width=0.95\textwidth, colspec={c|X[1,c]|X[1,c]X[1,c]X[1,c]X[1,c]}}
        \toprule
        Edit Ratio & Original & $10\%$ & $20\%$ & $30\%$ & $40\%$ \\
        \midrule
        Random & \SetCell[r=7]{c} 81.47 & 72.34 ± 0.16 & 64.83 ± 0.10 & 58.06 ± 0.12 & 51.62 ± 0.09 \\
        VG && 70.99 ± 0.07 & 65.15 ± 0.15 & 60.29 ± 0.07 & 54.43 ± 0.20 \\
        SG && 58.70 ± 0.19 & 46.59 ± 0.10 & 37.46 ± 0.14 & 29.82 ± 0.10 \\
        G$\odot$I && 71.65 ± 0.11 & 65.88 ± 0.08 & 60.30 ± 0.15 & 53.51 ± 0.02 \\
        IG && 71.89 ± 0.05 & 65.82 ± 0.10 & 60.67 ± 0.16 & 54.91 ± 0.12 \\
        EG && 70.63 ± 0.12 & 64.10 ± 0.03 & 57.56 ± 0.09 & 50.36 ± 0.10 \\
        SIG && 60.49 ± 0.03 & 51.98 ± 0.20 & 45.11 ± 0.17 & 38.52 ± 0.17 \\
        \bottomrule
        \SetCell[c=5]{c} \\[-50pt]
    \end{tblr}

    \begin{tblr}{width=0.95\textwidth, colspec={c|X[1,c]X[1,c]X[1,c]X[1,c]X[1,c]}}
        \toprule
        Edit Ratio & $50\%$ & $60\%$ & $70\%$ & $80\%$ & $90\%$ \\
        \midrule
        Random & 45.39 ± 0.12 & 39.76 ± 0.16 & 31.76 ± 0.03 & 19.85 ± 0.11 & 8.01 ± 0.13 \\
        VG & 49.23 ± 0.19 & 42.33 ± 0.04 & 34.03 ± 0.12 & 23.86 ± 0.16 & 13.11 ± 0.08 \\
        SG & 24.14 ± 0.14 & 19.16 ± 0.07 & 14.74 ± 0.10 & 9.95 ± 0.10 & 3.06 ± 0.06 \\
        G$\odot$I & 46.69 ± 0.17 & 39.29 ± 0.09 & 30.48 ± 0.07 & 19.39 ± 0.14 & 8.94 ± 0.12 \\
        IG & 47.98 ± 0.12 & 39.52 ± 0.13 & 31.08 ± 0.06 & 20.62 ± 0.17 & 6.61 ± 0.11 \\
        EG & 42.88 ± 0.22 & 35.28 ± 0.05 & 26.96 ± 0.06 & 16.85 ± 0.08 & 7.47 ± 0.10 \\
        SIG & 32.23 ± 0.18 & 26.22 ± 0.10 & 19.55 ± 0.19 & 13.11 ± 0.08 & 6.80 ± 0.10 \\
        \bottomrule
    \end{tblr}
\end{table}

\subsubsection{Qualitative Examples of Input Manipulation}\label{apx:examples}
Figures~\ref{apx:fig:mnist},~\ref{apx:fig:cifar}, and~\ref{apx:fig:stl} illustrate examples of perturbed inputs after explanation-guided manipulation.
The manipulation process follows the explanations derived by IG, and a manipulation ratio of $90\%$ is used.
For all three figures, the first column lists the target explicand, with the corresponding class label presented to the right of the input image.
The second column visualizes the attributions determined by IG, with red/blue indicating positive/negative and the intensity implying the attribution amplitude.
The last three columns correspond to the manipulated inputs following the \roar{}, \kaftc{}, and \raftcAbs{}, respectively.

The sign issue is emphasized by the target-overlapping negatives observed across all examples --- pixels marked by blue in feature attribution maps fall in the region representing the target object.
When following the highest-first manipulation strategies with attribution signs, residual information is preserved, which distorts evaluation results under \roar{}.
In the last two rows of Figure~\ref{apx:fig:mnist}, the simple examples visualize how isolating negative features decomposes different information representations.
For the second-to-last row, the negative features decompose a representation of ``1'' from ``7'', and similarly for the last row, where a representation of ``4'' is composed from ``9''.

By contrast, the lowest-first removal adopted by the \kear{} family intends to retain influential features from the model perspective, which results in a better preservation of the target object.
The correctness and usefulness of the explanation are reflected by the higher retraining accuracies, especially compared to \roar{}.
Although both schemes highlight visually similar regions, the higher retraining accuracies confirm the better utility of positively influential features as determined by the explainer.
Another alternative to address the sign issue, \raftcAbs{}, follows the relevant-first manipulation, associating only with attribution amplitudes.
With all influential features occluded, regardless of which classes they contribute to, the target objects are masked out by replacement values and become unrecognizable.
The effective masking guided by the explanation method demonstrates the high quality of the derived explanations in both determining feature contribution directions (the sign of feature attribution) and isolating relevant features (high amplitudes) from irrelevant ones (low amplitudes).

Figures~\ref{apx:fig:resnet} and~\ref{apx:fig:swint} illustrate example explanations and corresponding manipulations in the large-scale settings.
Of particular note is Figure~\ref{apx:fig:swint}, which highlights the problematic attributions determined by IG. 
The listed examples from the ImageNet-SwinT test case demonstrate that IG can assign high attribution scores to uninformative regions (as pointed out by the arrows), helping to interpret the performance collapse presented in Figure~\ref{fig:ratioLarge}.
This observation is not limited to the shown examples and commonly happens across this specific test case.
A discussion regarding the potential causes of this failure has been made in Section~\ref{sec:exp_large}.

\begin{figure*}
    \centering
    \includegraphics[height=.93\textheight]{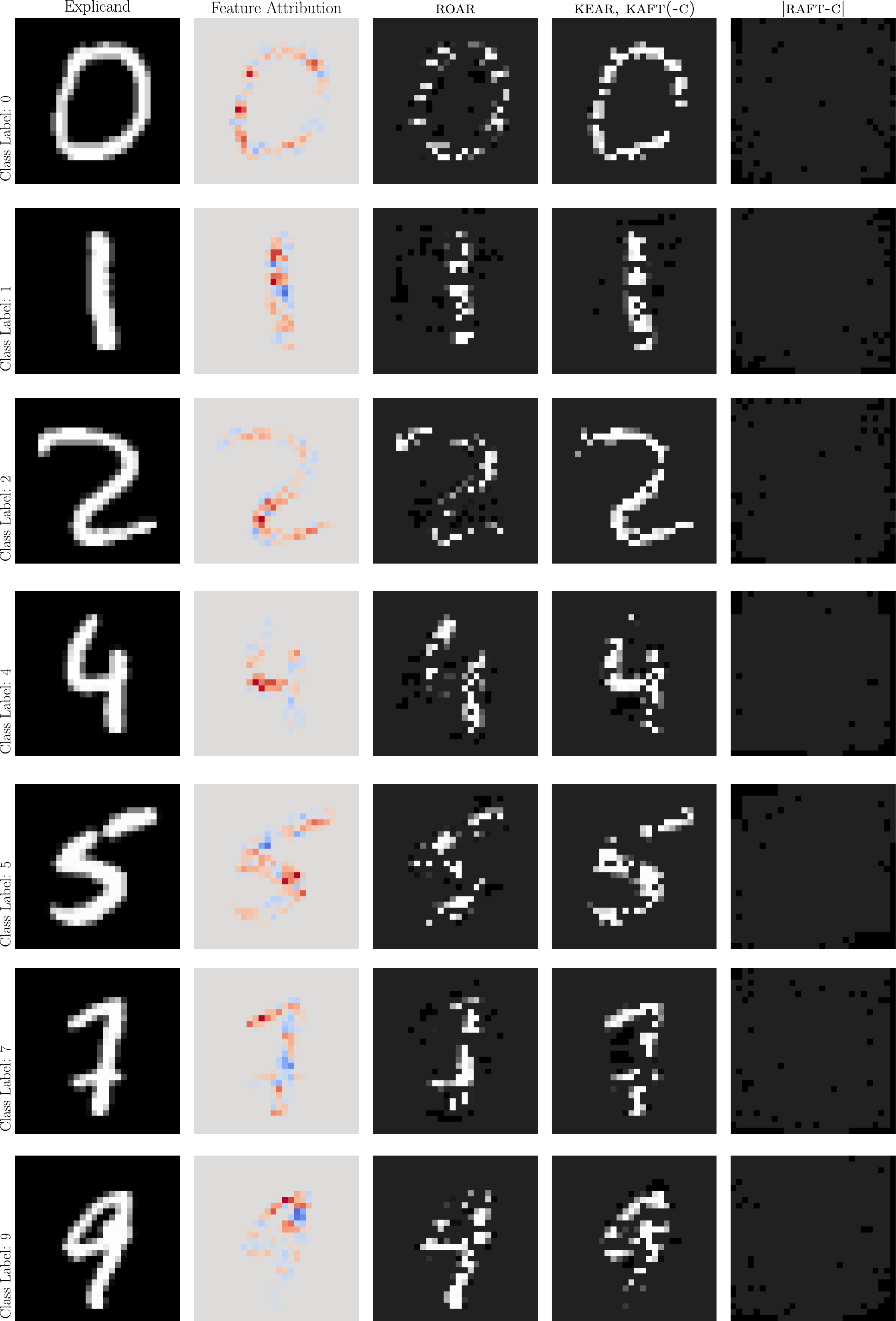}
    \caption{Examples of manipulated inputs from MNIST with $90\%$ of features occluded.}\label{apx:fig:mnist}
\end{figure*}

\begin{figure*}
    \centering
    \includegraphics[height=.93\textheight]{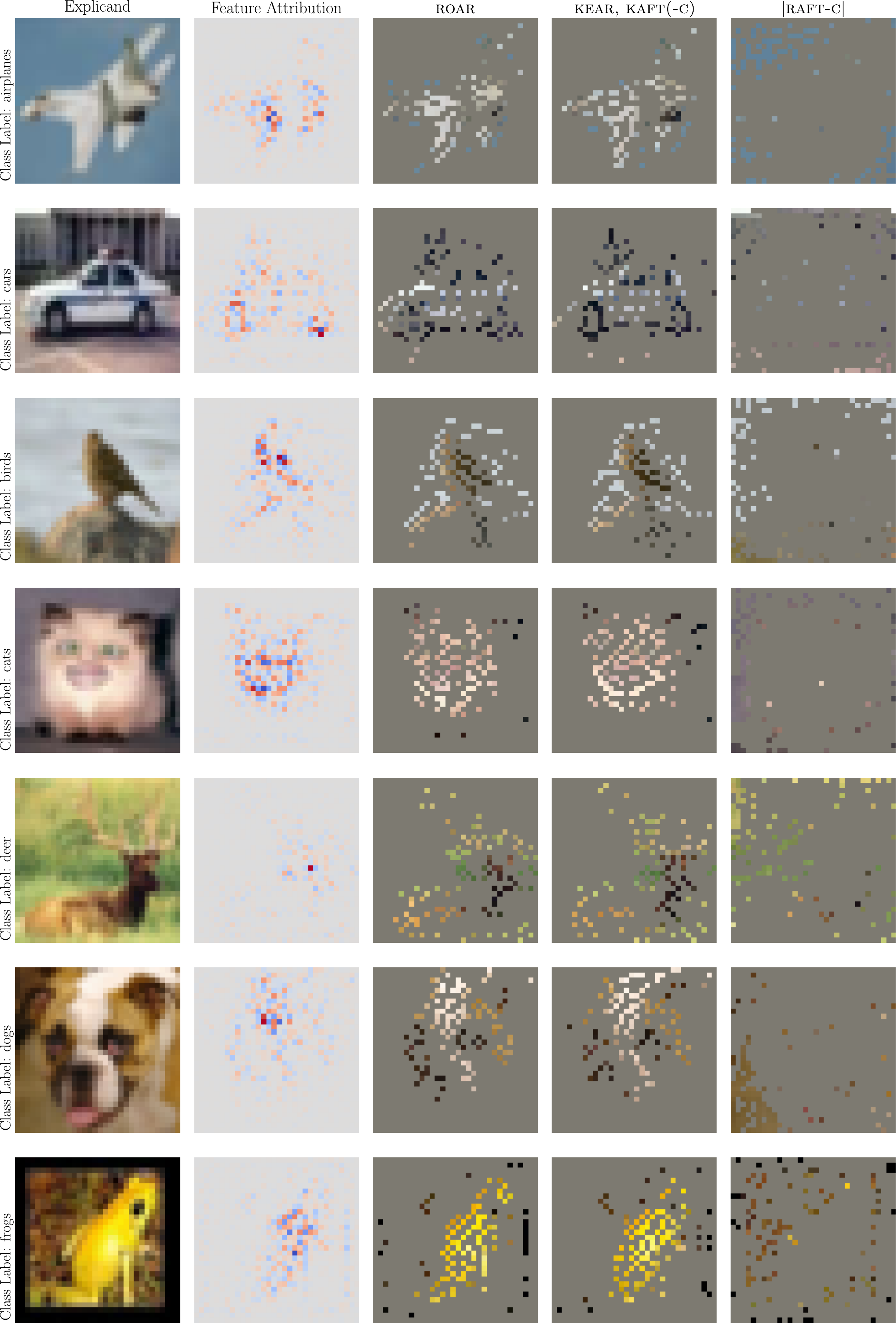}
    \caption{Examples of manipulated inputs from CIFAR10 with $90\%$ of features occluded.}\label{apx:fig:cifar}
\end{figure*}

\begin{figure*}
    \centering
    \includegraphics[height=.93\textheight]{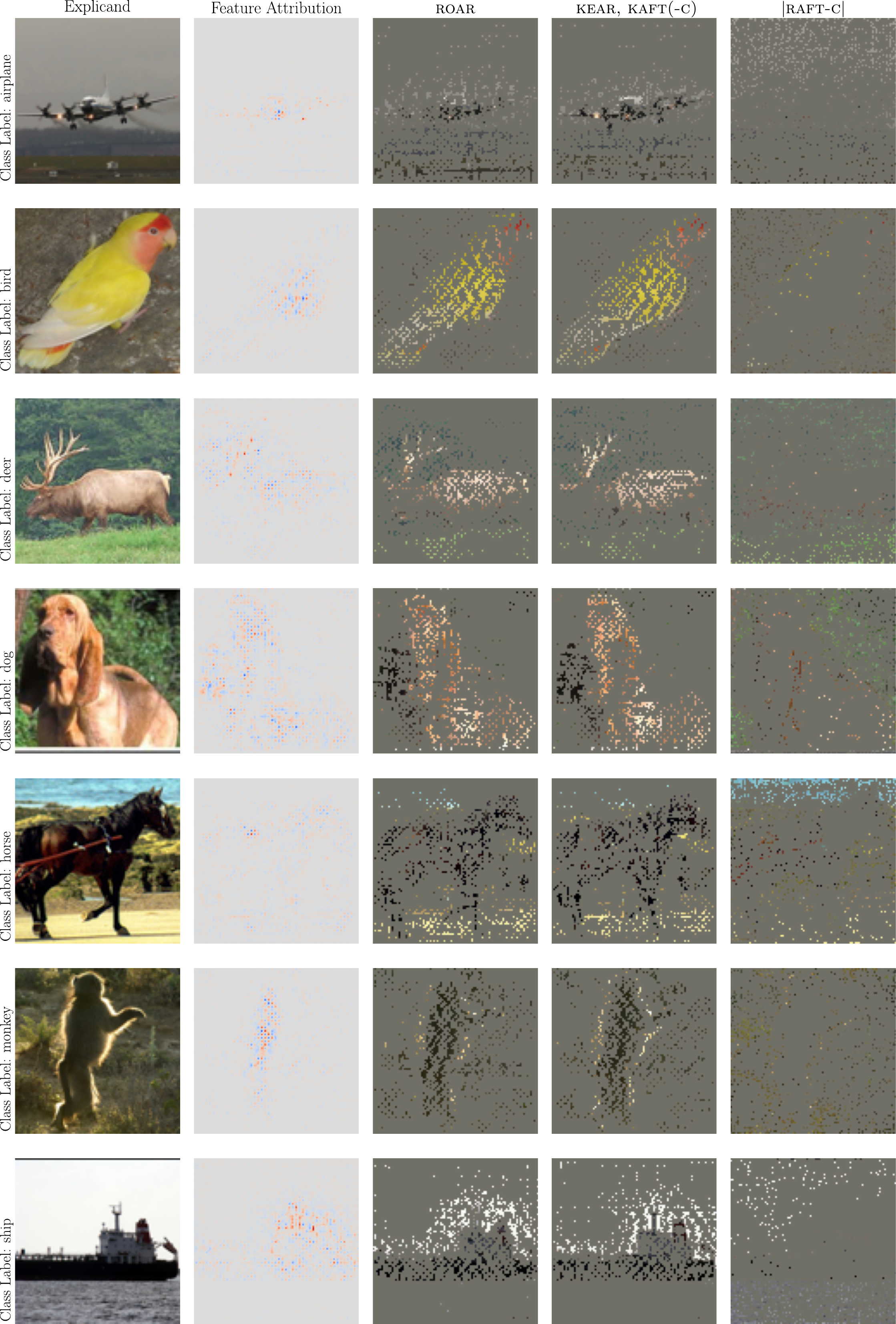}
    \caption{Examples of manipulated inputs from STL10 with $90\%$ of features occluded.}\label{apx:fig:stl}
\end{figure*}

\begin{figure*}
    \centering
    \includegraphics[height=.93\textheight]{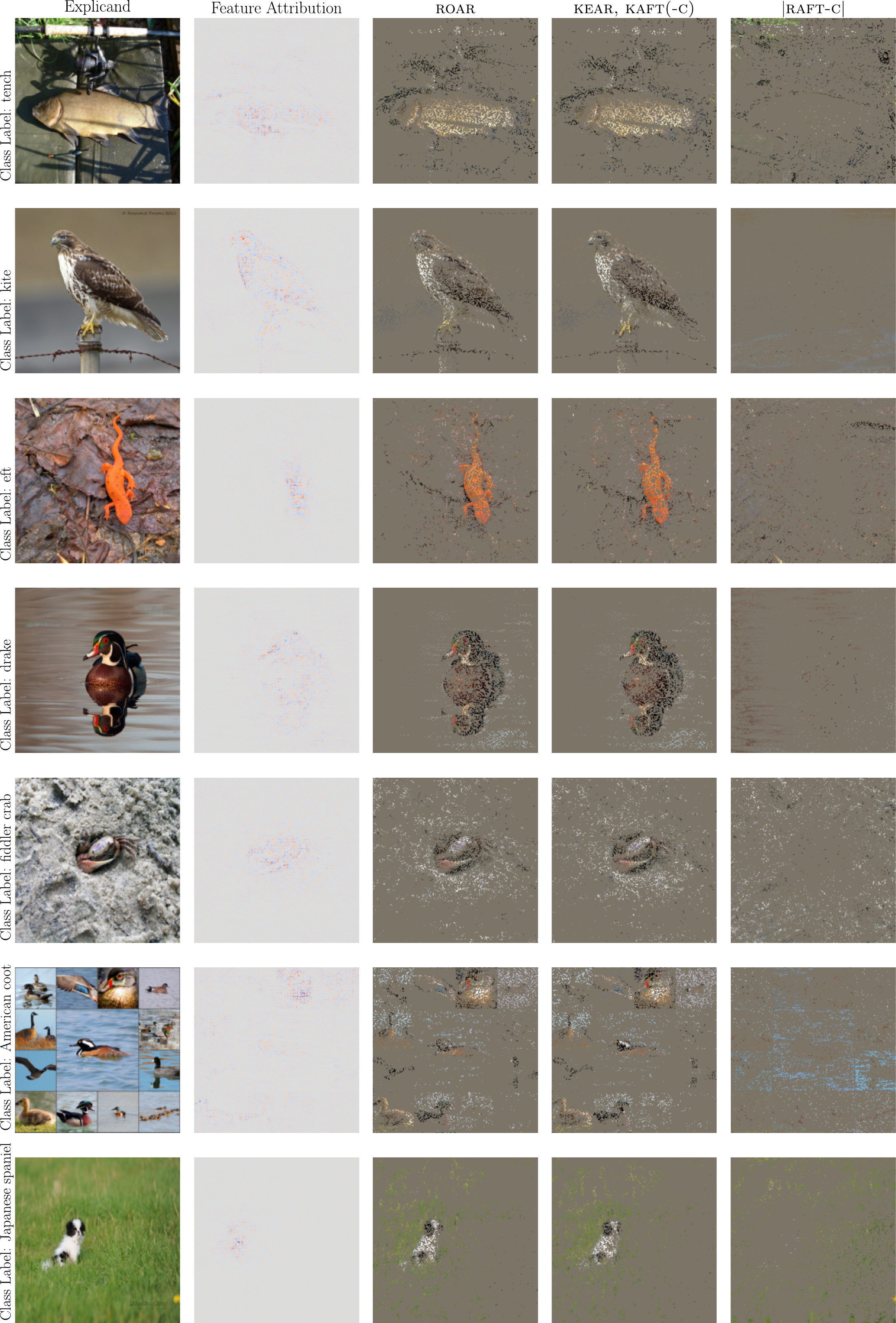}
    \caption{Examples of manipulated inputs from ImageNet-ResNet49 with $90\%$ of features occluded.}\label{apx:fig:resnet}
\end{figure*}

\begin{figure*}
    \centering
    \includegraphics[height=.93\textheight]{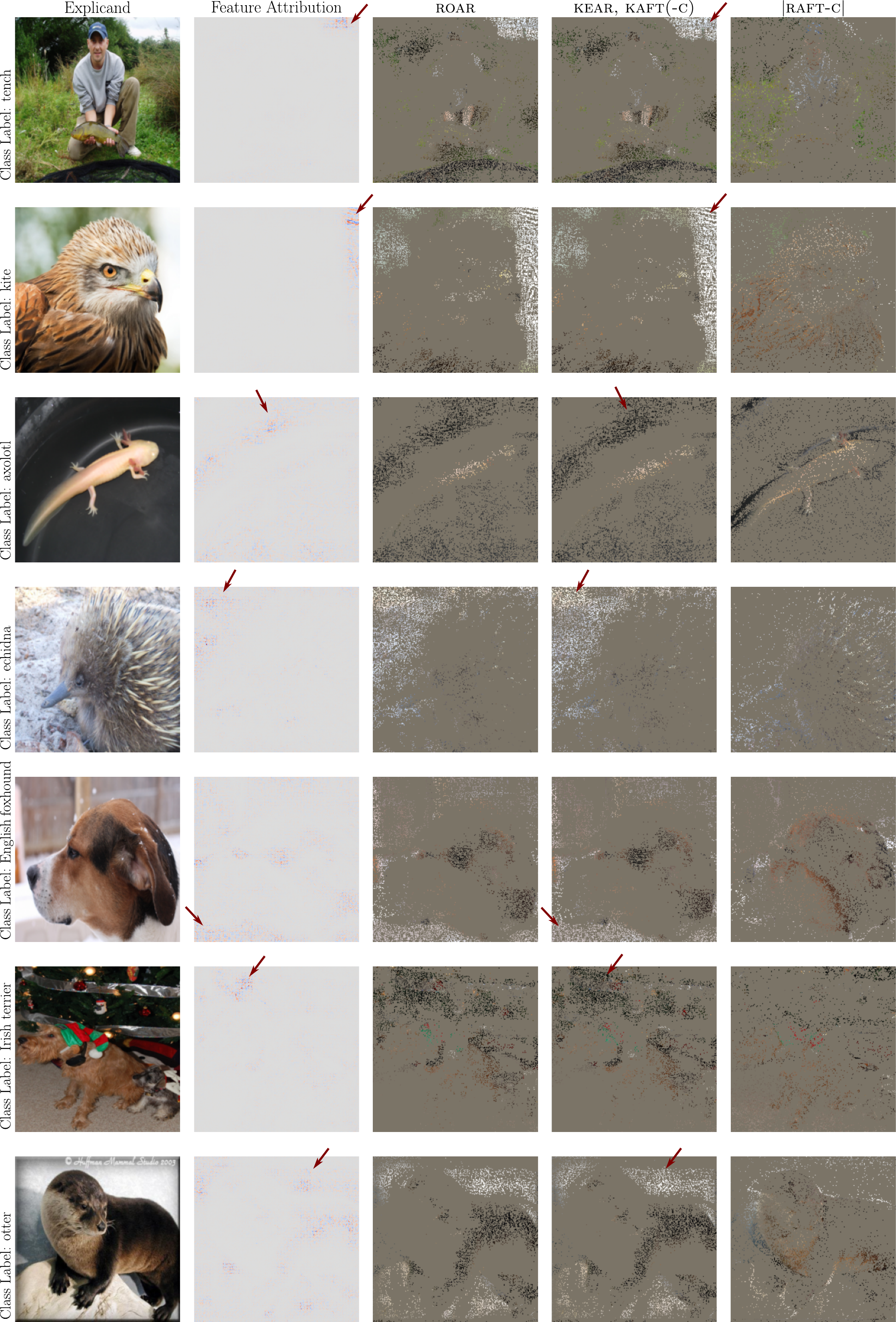}
    \caption{Examples of manipulated inputs from ImageNet-SwinT with $90\%$ of features occluded. The red arrow points to the problematic attributions in the second column, and how they are reflected during the evaluation process, as highlighted in the fourth column. }\label{apx:fig:swint}
\end{figure*}

\subsection{Limitation}
This paper emphasizes the importance of proper evaluation procedures within the retraining scheme framework, involving two key configurations: manipulation priority and model update constraints. 
With a clear intention to address the sign issue, the discussion centers on lowest-first removal combined with various model update protocols, which yield consistent results. 
The experiments also include \raftcAbs{} as a reference to demonstrate explainers' effectiveness in occluding features relevant to model decisions. 
However, a more extensive analysis of alternative combinations of evaluation settings has not been made due to space constraints. 

In fact, even though certain evaluation schemes may produce problematic assessments when considered in isolation, they may still offer valuable insights when interpreted in conjunction with complementary schemes. 
Concretely, \roar{} is prone to distortion due to negatively attributed features, which can result in unexpectedly high accuracies after retraining. 
However, when \roar{} and \kear{} are interpreted together, a meaningful comparison emerges: for an effective explainer, the retraining accuracy after lowest-first removal should exceed the result of highest-first removal. 
This intuition can be interpreted as: positively attributed features should support model decisions more directly than negative ones. 
Conversely, the opposite implies a potential misassignment of attribution signs, where negative features outweigh the positives in utility. 
As future work, we aim to expand this discussion by developing a more structured explanation evaluation framework that aligns results across retraining scheme variants with diverse manipulation priorities. 
By integrating outcomes from different configurations, such a framework could provide multifaceted assessments, contributing to a more objective and comprehensive understanding of explanation quality.



\end{document}